\newtheorem{definition}{Definition}
\newtheorem{lemma}{Lemma}
\newtheorem{theorem}{Theorem}
\newtheorem{corollary}{Corollary}
\def\n(#1){\bar{#1}}
\def\pr{{\it Pr}}
\def\X{{\bf X}}
\def\x{{\bf x}}
\def\Y{{\bf Y}}
\def\y{{\bf y}}
\def\Z{{\bf Z}}
\def\z{{\bf z}}
\def\eql(#1,#2){{#1\!=\!#2}}
\newcommand\name[1]{\ensuremath{\mathsf{#1}}}
\newcommand{\argmax}{\operatornamewithlimits{arg max}}
\def\eproof{{\hfill$\Box$}}
\newenvironment{proof}[1][Proof]
			   {\begin{trivlist}\item[\hskip \labelsep {\bfseries #1}]}
			   {~\eproof \end{trivlist}}
\def\ace{\name{ace}}
\def\ac{{\cal AC}}
\def\vars{{\it vars}}
\newcommand\de[1]{\underline{#1}}
\icmltitlerunning{On Relaxing Determinism in Arithmetic Circuits}
\begin{document} 

\twocolumn[
\icmltitle{On Relaxing Determinism in Arithmetic Circuits}

\icmlsetsymbol{equal}{*}

\begin{icmlauthorlist}
\icmlauthor{Arthur Choi}{ucla}
\icmlauthor{Adnan Darwiche}{ucla}
\end{icmlauthorlist}

\icmlaffiliation{ucla}{University of California, Los Angeles, California, USA}
\icmlcorrespondingauthor{Arthur Choi}{aychoi@cs.ucla.edu}
\icmlcorrespondingauthor{Adnan Darwiche}{darwiche@cs.ucla.edu}

\icmlkeywords{tractable probabilistic models, arithmetic circuits, knowledge compilation, ICML}

\vskip 0.3in
]

\printAffiliationsAndNotice{}

\begin{abstract}
The past decade has seen a significant interest in learning tractable probabilistic representations. Arithmetic circuits (ACs) were among the first proposed tractable representations, with some subsequent representations being instances of ACs with weaker or stronger properties.  In this paper, we provide a formal basis under which variants on ACs can be compared, and where the precise roles and semantics of their various properties can be made more transparent.  This allows us to place some recent developments on ACs in a clearer perspective and to also derive new results for ACs. This includes an exponential separation between ACs with and without determinism; completeness and incompleteness results; and tractability results (or lack thereof) when computing most probable explanations (MPEs).
\end{abstract}

\section{Introduction}

Arithmetic circuits (ACs) were introduced into the AI literature more than fifteen years ago as a tractable representation of probability distributions. The original work on these circuits proposed the compilation of such circuits from Bayesian networks, while identifying and assuming three circuit properties, called determinism, decomposability and smoothness \cite{darwiche01tractable,darwicheJACM-DNNF,Darwiche03}. Since then, the literature on using arithmetic circuits for probabilistic reasoning has seen two key developments.  The first is the proposal made by \cite{LowdD08} to learn these circuits directly from data---instead of just compiling them from models---therefore creating two distinct construction modes for these circuits. The second development, reported by \cite{PoonD11}, amounted to proposing a class of arithmetic circuits that does not satisfy determinism, under the name of sum-product networks (SPNs).

An examination of the literature surrounding arithmetic circuits and their variants suggests that the implications of relaxing determinism are not very well understood, even leading to conflicting claims in some cases. The treatment of smoothness has also not been very consistent as far as its necessity for certain operations on arithmetic circuits, and the complexity of enforcing it. Our goal in this paper is to address some of these issues by providing a systematic and formal treatment of arithmetic circuits, while focusing on the precise roles and semantics of their various properties and the implications of relaxing determinism.

We make several contributions in this paper. We start by reconstructing the original definition of arithmetic circuits given in \cite{darwicheKR02,Darwiche03}, while assuming that these circuits represent arbitrary factors, instead of just distributions induced by Bayesian networks (a particular class of factors). We then provide definitions for decomposability, smoothness and determinism in the context of this reconstruction, while isolating precisely the role that each one of these properties play. Some of what we report on this has already been observed in the literature, but we provide alternate or more formal proofs for the sake of a systematic and inclusive treatment. We also derive new results. The first of these is a separation theorem showing that relaxing determinism can lead to exponentially smaller arithmetic circuits, while preserving the ability of these circuits to compute marginals in linear time. This begs the question of whether anything is lost from relaxing determinism. On this front, we highlight a finding that has already been reported in the literature and introduce new ones. In particular, we provide an expanded proof for the observation that relaxing determinism deprives arithmetic circuits from the ability to compute MPE in linear time. We also add a new result showing that enforcing decomposability has the power of solving MPE, even though the MPE query is not tractable for decomposable circuits. Moreover, we show that relaxing determinism leads to a type of incompleteness that we call {\em parametric incompleteness,} with important implications on the compilability of circuits from models. Our final contribution is a formal correctness proof of the linear-time MPE algorithm, originally proposed by \cite{ChanD06}, but with respect to the reconstructed definition of arithmetic circuits satisfying decomposability, determinism and smoothness.

This paper is structured as follows. We reconstruct the definition of arithmetic circuits as given by \cite{darwicheKR02,Darwiche03} in Section~\ref{sec:ACs}, but with respect to factors instead of distributions (of Bayesian networks). We then provide a new treatment of decomposability and smoothness in Section~\ref{sec:decomposability}, followed by a new treatment of determinism in Section~\ref{sec:determinism}. We finally focus on the relaxation of determinism in Section~\ref{sec:relax}, where we provide a new set of results and insights.

\section{Arithmetic Circuits} \label{sec:ACs}
Capital letters (\(X\)) denote variables and lower-case letters (\(x\)) denote their values. Bold capital letters (\(\X\)) denote sets of variables and bold lower-case letters (\(\x\)) denote their instantiations. Value \(x\) is {\em compatible} with instantiation \(\y\) iff \(\y\) assigns value \(x\) to \(X\) or does not assign any value to \(X\).
\begin{definition}
A \de{factor} \(f(\X)\) over variables \(\X\) maps each instantiation \(\x\) of variables \(\X\) into a non-negative number \(f(\x)\). The factor is a distribution when \(\sum_\x f(\x) = 1\).
\end{definition}
The classical, tabular representation of a factor \(f(\X)\) is clearly exponential in the number of variables \(\X\), yet it allows one to answer key probabilistic queries efficiently. The interest here is in a more compact representation of these factors, using arithmetic circuits, while preserving the ability to answer some of these queries efficiently. We focus on the following queries, all with respect to a factor \(f(\X)\), with its variables \(\X\) partitioned into sets \(\Y\) and \(\Z\):
\begin{itemize}
\item Computing the {\em value} of factor \(f(\X)\) at instantiation \(\y\), defined as \(f(\y) = \sum_{\z} f(\y,\z)\) and called a {\em marginal} in this paper. This corresponds to the \emph{probability of evidence} \(\y\) when the factor is a distribution.
\item Computing the {\em MPE} of factor \(f(\X)\), defined as \(\argmax_\x f(\x)\). This corresponds to the most likely instantiation when the factor is a distribution. 
\item Computing the {\em MAP} over variables \(\Y\), defined as  \(\argmax_\y \sum_\z f(\y,\z)\). This is the most likely state of variables \(\Y\) when the factor is a distribution. 
\end{itemize}
For Bayesian networks (interpreted as factors), the decision variants of the MPE, marginals, and MAP problems are, respectively, \(\mathrm{NP}\)-complete \cite{Shimony94}, \(\mathrm{PP}\)-complete \cite{Roth96}, and \(\mathrm{NP}^{\mathrm{PP}}\)-complete \cite{ParkD04}; see also \cite{Darwiche09}. Hence, computing marginals is more difficult than computing MPE---an observation that will be quite relevant later.

We also need to define the {\em projection} of factor \(f(\X)\) on variables \(\Y\) as the factor \(g(\Y)\) with \(g(\y) = \sum_\z f(\y,\z)\). This projection will be denoted by \(\sum_\Z f(\X)\).

We next define an arithmetic circuit over discrete variables \(\X\), as utilized in~\cite{darwicheKR02,Darwiche03} to represent distributions, except that we will utilize it to represent factors. A key observation here is that the circuit inputs are not variables \(\X\), but {\em indicator variables} that are derived from the values of variables \(\X\)~\cite{darwicheKR02,Darwiche03}.
\begin{definition}\label{def:ac}
An \de{arithmetic circuit} \(\ac(\X)\) over variables \(\X\) is a rooted DAG whose internal nodes are labeled with \(+\) or \(*\) and whose leaf nodes are labeled with either indicator variables \(\lambda_x\) or non-negative parameters \(\theta\). Here, \(x\) is the value of some variable \(X\) in \(\X\). The \de{value} of the circuit at instantiation \(\y\), where \(\Y \subseteq \X\), is denoted \(\ac(\y)\) and obtained by assigning indicators \(\lambda_x\) the value \(1\) if \(x\) is compatible with instantiation \(\y\) and \(0\) otherwise, then evaluating the circuit in the standard way (in linear time).
\end{definition}

The following definition makes a distinction that has not been made explicit in the literature as far as we know, but is critical for a clear semantics of arithmetic circuits.
\begin{definition}\label{def:computes}
The circuit \(\ac(\X)\) \de{computes factor} \(f(\X)\) iff \(\ac(\x)=f(\x)\) for each instantiation \(\x\) of variables \(\X\). The circuit \de{computes the factor marginals} iff \(\ac(\y) = f(\y)\) for each instantiation \(\y\) of every \(\Y \subseteq \X\).
\end{definition}

The notion of ``computes a factor'' constrains the value of an arithmetic circuit under a strict subset of its inputs (i.e., those corresponding to complete instantiations). However, the notion of ``computes marginals'' constrains its value under every input. Hence, two arithmetic circuits that represent distinct functions (over indicator variables) may still compute the same factor. Consider an arithmetic circuit that computes a factor \(f(X,\ldots)\), where \(X\) has values \(x\) and \(\bar{x}\). Replacing \(\lambda_x + \lambda_{\bar{x}}\) with \(1\) in this circuit preserves its ability to compute the factor since \(\lambda_x + \lambda_{\bar{x}} = 1\) for every input that is relevant to computing the factor. This replacement, however, will change the function represented by the circuit and its ability to compute the factor marginals.

\begin{figure}[tb]
\centering
\subfigure[\(f_1(A)\)]{\label{fig:factor1}
\small
\begin{tabular}[b]{c|c}
\(A\) & \(f_1\) \\\hline
1 & 1 \\
0 & 2
\end{tabular}}
\quad \quad
\subfigure[\(f_2(A,B)\)]{\label{fig:factor2}
\small
\begin{tabular}[b]{cc|c}
\(A\) & \(B\) & \(f_2\) \\\hline
1 & 1 & 3\\
1 & 0 & 4 \\
0 & 1 & 5 \\
0 & 0 & 6 
\end{tabular}}
\quad \quad
\subfigure[\(f = f_1 f_2\)]{\label{fig:factor3}
\small
\begin{tabular}[b]{cc|c}
\(A\) & \(B\) & \(f\) \\\hline
1 & 1 & 3\\
1 & 0 & 4 \\
0 & 1 & 10 \\
0 & 0 & 12
\end{tabular}}
\caption{Two factors and their product.
\label{fig:factors}}
\end{figure}

\begin{corollary}
An arithmetic circuit that computes the marginals of a factor also computes the factor. However, an arithmetic circuit that computes a factor does not necessarily compute its marginals.
\end{corollary}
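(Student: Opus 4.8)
My plan is to treat the two halves of the statement separately, since they are of quite different character. For the first half --- that computing the marginals implies computing the factor --- I would simply unwind Definition~\ref{def:computes}. If \(\ac(\X)\) computes the marginals of \(f(\X)\), then \(\ac(\y) = f(\y)\) for every instantiation \(\y\) of every \(\Y \subseteq \X\); instantiating this with \(\Y = \X\) gives \(\ac(\x) = f(\x)\) for every complete instantiation \(\x\), which is exactly the condition for \(\ac(\X)\) to compute \(f(\X)\). The only subtlety to flag is that ``computes the marginals'' quantifies over \emph{all} subsets \(\Y\) of \(\X\), the full set \(\Y = \X\) included.

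For the second half I would exhibit an explicit counterexample, reusing the \(\lambda_x + \lambda_{\bar{x}} = 1\) observation already made above. Take a single binary variable \(A\) with values \(a, \bar{a}\) and the factor \(f(A)\) with \(f(a) = f(\bar{a}) = 1\). The circuit \(\ac = \lambda_a + \lambda_{\bar{a}}\) computes \(f\) (indeed it computes the marginals, since on the empty instantiation it evaluates to \(2 = f(a) + f(\bar{a})\)). Replacing the single \(+\)-node by the constant leaf \(1\) yields a circuit \(\ac'\) that still computes \(f\), because \(\lambda_a + \lambda_{\bar{a}}\) equals \(1\) under every input that sets \(A\); but \(\ac'\) evaluates to \(1\) on the empty instantiation, whereas the corresponding marginal is \(\sum_A f(A) = 2\), so \(\ac'\) does not compute the marginals of \(f\). (One can equally well take \(\ac'\) to be the bare constant circuit \(1\) and verify it directly.)

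I do not expect a genuine obstacle: both directions fall out of Definition~\ref{def:computes}, the first by specializing a universal quantifier and the second by a one-line example. The only thing requiring a little care is making the counterexample airtight --- checking that \(\ac'\) agrees with \(f\) on \emph{every} complete instantiation while disagreeing with \(f\) on some proper sub-instantiation --- which is why a uniform factor (where the indicator sum genuinely collapses to a constant) is the convenient choice.
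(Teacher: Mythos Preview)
Your proposal is correct and follows the paper's approach: the first implication is immediate from Definition~\ref{def:computes} by taking \(\Y=\X\), and the second is by counterexample. The paper does not give a formal proof of the corollary; it states it as obvious and then exhibits a counterexample. Your constant-\(1\) circuit over a single binary variable is precisely the minimal instantiation of the observation the paper makes just before the corollary (replacing \(\lambda_x+\lambda_{\bar x}\) by \(1\) preserves factor-computation but not marginals), whereas the paper's explicit post-corollary example is the two-variable circuit \([\lambda_{a}+2\lambda_{\bar a}][3\lambda_a\lambda_b+4\lambda_a\lambda_{\bar b}+5\lambda_{\bar a}\lambda_b+6\lambda_{\bar a}\lambda_{\bar b}]\) tied to Figure~\ref{fig:factors}. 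Both work; yours is simpler.
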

Consider the following arithmetic circuit which computes the factor in Figure~\ref{fig:factor3}:
\[
[\lambda_{a} + 2\lambda_{\bar{a}}]
[3\lambda_{a} \lambda_{b} + 
4\lambda_{a} \lambda_{\bar{b}} + 
5\lambda_{\bar{a}} \lambda_{b} + 
6\lambda_{\bar{a}} \lambda_{\bar{b}}].
\]
This circuit {\em does not} compute the factor marginals. Moreover, this circuit is the product of two circuits, one computing factor \(f_1\), the other computing factor \(f_2\), as in Figure~\ref{fig:factors}.

To get further insights into the notion of ``computing marginals,'' we appeal to the notion of a network polynomial~\cite{Darwiche03}, while lifting it to factors.
\begin{definition}\label{def:poly}
The \de{polynomial} of factor \(f(\X)\) is defined as:
\[
\sum_\x f(\x) \prod_{x \sim \x} \lambda_x,
\]
where \(x \sim \x\) means that the value \(x\) of variable \(X \in \X\) is compatible with instantiation \(\x\) of variables \(\X\).
\end{definition}
The polynomial of factor \(f(A,B)\) in Figure~\ref{fig:factor3} is
\(
3\lambda_{a} \lambda_{b} + 
4\lambda_{a} \lambda_{\bar{b}} + 
10\lambda_{\bar{a}} \lambda_{b} + 
12\lambda_{\bar{a}} \lambda_{\bar{b}}
\).
The polynomial of factor \(f(\X)\) corresponds to the simplest circuit that computes the factor marginals. It is a two-level circuit though, which has an exponential size. The interest, however, is in deeper circuits that may not be exponentially sized. We later discuss circuit properties that allows us to achieve this, sometimes.

One can construct an arithmetic circuit that computes the distribution of a Bayesian network or the partition function of a Markov network in time and space that are linear in the size of these models. Each of these models correspond to a set of factors \(f_1, \ldots, f_n\), with the model representing the product of these factors. We can construct a circuit that computes each factor as given in Definition~\ref{def:poly}, then simply combine these circuits using a multiplication node. The result will compute the factor \(f = f_1 \ldots f_n\) but it will not necessarily compute its marginals. We next show that if we enforce the properties of decomposability and smoothness on such a circuit, while maintaining its ability to compute the factor \(f\), the resulting circuit will also compute the factor marginals. Therefore, these two properties turn the circuit into a tractable representation of the factor, allowing one to compute marginals by simply evaluating the circuit as in Definition~\ref{def:ac} (in time linear in the circuit size).

\section{Decomposability and Smoothness} \label{sec:decomposability}

\begin{figure}[tb]
\centering
\includegraphics[width=.35\linewidth]{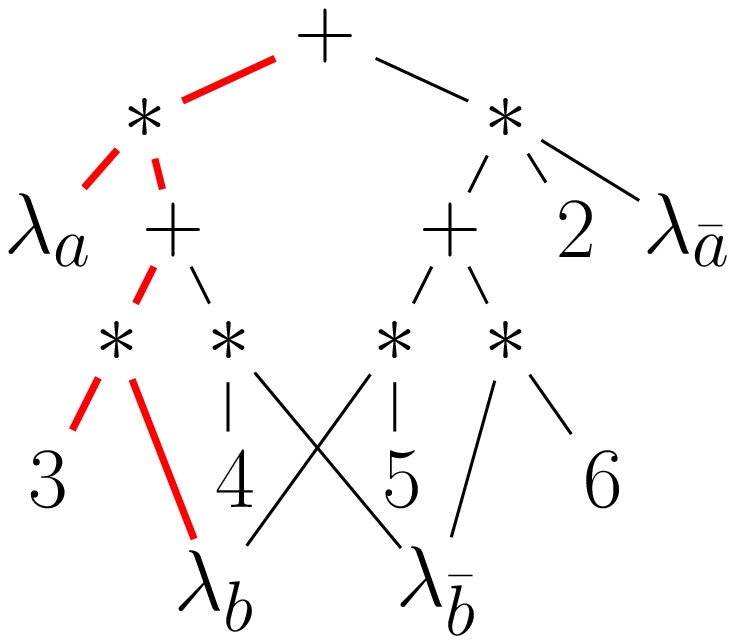}
\quad
\includegraphics[width=.35\linewidth]{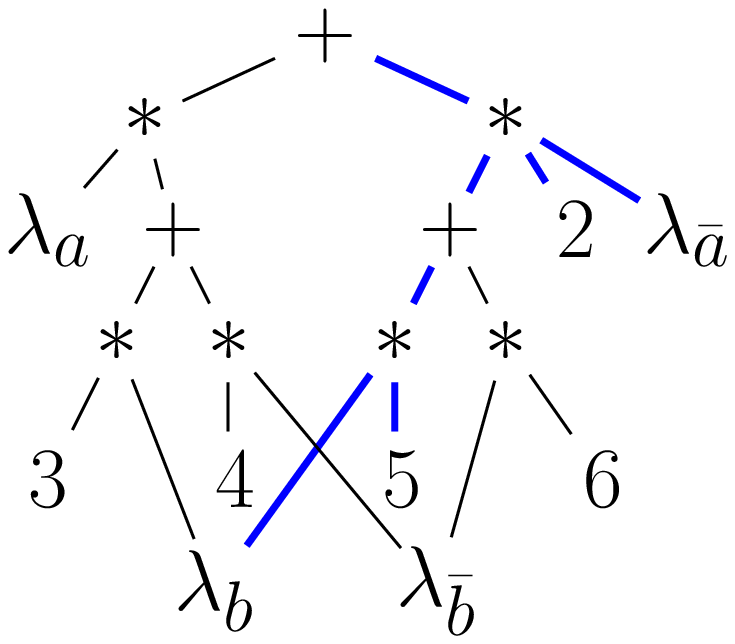}
\caption{An AC that computes factor \(f = f_1 \cdot f_2\) from Figure~\ref{fig:factors}, where an \(ab\)-subcircuit (left) and an \(\n(a)b\)-subcircuit (right) are highlighted. This circuit is deterministic, decomposable and smooth.
\label{fig:ac}}
\end{figure}

The property of {\em decomposability} \cite{darwicheJACM-DNNF} was used for tractable probabilistic reasoning in~\cite{Darwiche03} by compiling Bayesian networks into arithmetic circuits that are guaranteed to be decomposable. This property was also enforced by the algorithm proposed in~\cite{LowdD08} for learning arithmetic circuits.
\begin{definition}[Decomposability]
Let \(n\) be a node in an arithmetic circuit \(\ac(\X)\). The variables of \(n\), denoted \(\vars(n)\), are all variables \(X \in \X\) with some indicator \(\lambda_x\) appearing at or under node \(n\). An arithmetic circuit is \de{decomposable} iff every pair of children \(c_1\) and \(c_2\) of a \(*\)-node satisfies \(\vars(c_1)\cap\vars(c_2) = \emptyset\).
\end{definition}

The property of {\em smoothness} \cite{darwicheJACM-DNNF} was also used for probabilistic reasoning in~\cite{Darwiche03} by compiling circuits that are smooth. It was also enforced by the learning algorithm of~\cite{LowdD08}. This property was later called {\em completeness} in the works on SPNs, initially in~\cite{PoonD11}.

\begin{definition}[Smoothness]
An arithmetic circuit \(\ac(\X)\) is \de{smooth} iff 
(1) it contains at least one indicator for each variable in \(\X\), and 
(2) for every child \(c\) of \(+\)-node \(n\), we have \(\vars(n) = \vars(c)\).
\end{definition}
Consider a factor \(f(X)\), where variable \(X\) is binary and \(f(x) = f(\bar{x}) = \theta\). A circuit that consists of the single parameter \(\theta\) will compute this factor but is not smooth. The circuit \(\theta \lambda_x + \theta \lambda_{\bar{x}}\) computes this factor and is smooth.

Consider a variable \(X\) with values \(x_1, \ldots, x_m\). Multiplying a circuit node by \(\lambda_{x_1} + \ldots + \lambda_{x_m}\) preserves the circuit's ability to compute a given factor since \(\lambda_{x_1} + \ldots + \lambda_{x_m} = 1\) under any circuit input that is relevant to this computation. One can use this technique to ensure the smoothness of any circuit, while incurring only a polynomial overhead.\footnote{\cite{darwiche01tractable} shows how to smooth an NNF circuit in \(O(mn)\) space and time, where \(n\) is the size of the circuit and \(m\) is the number of variables (the method can be adapted to ACs).\label{footnote:smooth}} Hence, contrary to decomposability and determinism, enforcing smoothness is not difficult computationally, yet it is necessary for an arithmetic circuit to compute marginals as we discuss later. We also state the following observation, used extensively in inductive proofs that we utilize later.

\begin{lemma}\label{lem:induction}
Consider a decomposable and smooth arithmetic circuit \(\ac(\X)\) and define \(\X_n = \vars(n)\) for each circuit node \(n\). Each arithmetic circuit \(\ac(\X_n)\) rooted at node \(n\) is also decomposable and smooth.
\end{lemma}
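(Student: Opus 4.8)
The plan is to show that the quantity \(\vars(\cdot)\) is left unchanged when we pass from \(\ac(\X)\) to the subcircuit \(\ac(\X_n)\) rooted at \(n\); once that is established, both properties transfer almost verbatim. The first step is a monotonicity observation: for any node \(m\) that lies at or below \(n\) in the DAG, \(\vars(m)\subseteq\vars(n)\), since every indicator \(\lambda_x\) appearing at or under \(m\) also appears at or under \(n\). Consequently, for every node \(m\) of the subcircuit \(\ac(\X_n)\), the set of variables having an indicator at or under \(m\) is a subset of \(\X_n=\vars(n)\); moreover, the indicators appearing at or under \(m\) are literally the same nodes whether we view \(m\) inside \(\ac(\X)\) or inside \(\ac(\X_n)\). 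Hence \(\vars(m)\) is well defined and identical in the two circuits, even though the underlying variable universe shrinks from \(\X\) to \(\X_n\).

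Given this, decomposability is immediate. Every \(*\)-node \(m\) of \(\ac(\X_n)\) is a \(*\)-node of \(\ac(\X)\) with the same children \(c_1,c_2\); decomposability of \(\ac(\X)\) gives \(\vars(c_1)\cap\vars(c_2)=\emptyset\), and by the previous step these are the same sets in \(\ac(\X_n)\), so \(\ac(\X_n)\) is decomposable. Smoothness follows similarly in two parts. For condition (1), \(\X_n=\vars(n)\) is by definition exactly the set of variables possessing an indicator at or under \(n\), so \(\ac(\X_n)\) contains at least one indicator for each variable in \(\X_n\). For condition (2), every \(+\)-node \(m\) of \(\ac(\X_n)\) with child \(c\) is a \(+\)-node of \(\ac(\X)\) with child \(c\), so smoothness of \(\ac(\X)\) yields \(\vars(m)=\vars(c)\), which is the same equation in \(\ac(\X_n)\).

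I expect no serious obstacle here; the one point requiring care — and the closest thing to one — is the bookkeeping in the first step, namely checking that restricting the variable universe from \(\X\) to \(\X_n\) does not alter the value of \(\vars\) on any node of the subcircuit. This is precisely where the monotonicity of \(\vars\) along DAG edges is used, and it is what makes the definitions of decomposability and smoothness for \(\ac(\X_n)\) coincide with the inherited conditions from \(\ac(\X)\).
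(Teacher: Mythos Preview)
Your argument is correct. The paper states this lemma as an observation without proof, so there is no paper-proof to compare against; your write-up supplies exactly the routine verification one would expect, with the only subtle point---that \(\vars(m)\) is unchanged when the ambient variable set shrinks from \(\X\) to \(\X_n\)---handled via the monotonicity of \(\vars\) along DAG edges.
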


A main insight in this paper is the use of subcircuits, first introduced in \cite{ChanD06} for a different purpose. They were also adopted in \cite{DennisV15,ZhaoPG16} to motivate SPN learning algorithms.
\begin{definition}[Subcircuits] \label{def:subcircuit}
A \de{complete subcircuit} \(\alpha\) of an arithmetic circuit is obtained by traversing the circuit top-down, while choosing one child of each visited \(+\)-node and all children of each visited \(*\)-node. The \de{term} of \(\alpha\) is the set of values \(x\) for which indicator \(\lambda_x\) appears in \(\alpha\). The \de{coefficient} of \(\alpha\) is the product of all parameters in \(\alpha\).
\end{definition}

The circuit \(2*\lambda_{x} + 1*\lambda_{\bar{x}} + 3*\lambda_{x}\) computes factor \(f(X)\) with \(f(x)=5\) and \(f(\bar{x})=1\). It is decomposable, smooth and has three complete subcircuits, with \((x,2)\), \((\bar{x},1)\) and \((x,3)\) as their term/coefficient pairs. Note that two subcircuits may have the same term but different coefficients.

The following lemma and its proof reveal the precise roles of decomposability and smoothness. Given decomposability, the term of a complete subcircuit will not contain conflicting values for any variable. Given smoothness, the term must contain a value for each variable.

\begin{lemma}\label{lem:inst}
Let \(\alpha\) be a complete subcircuit of an arithmetic circuit \(\ac(\X)\) that is decomposable and smooth. The term of subcircuit \(\alpha\) must be an instantiation of variables \(\X\).
\end{lemma}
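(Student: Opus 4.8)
The plan is to prove, by structural induction on the circuit, the slightly more local statement: for every node $n$ and every complete subcircuit $\alpha$ rooted at $n$, the term of $\alpha$ is an instantiation of $\X_n = \vars(n)$. The lemma is then the special case where $n$ is the root $r$, since smoothness condition~(1) forces $\vars(r) = \X$ (and $\vars(r)\subseteq\X$ always). The induction is legitimate because, by Lemma~\ref{lem:induction}, every subcircuit rooted at a node of $\ac(\X)$ is itself decomposable and smooth, so the inductive hypothesis is available for the subcircuits rooted at the children of $n$. Throughout I will use two elementary facts: for an internal node $n$, $\vars(n) = \bigcup_c \vars(c)$ with $c$ ranging over the children of $n$ (immediate from the definition of $\vars$); and the term of $\alpha$ is the union of the terms of the complete subcircuits that $\alpha$ selects at the children of $n$, since no new indicator is introduced at $n$ itself.

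For the base case, $n$ is a leaf. If $n$ is an indicator $\lambda_x$ with $x$ a value of $X$, then $\vars(n) = \{X\}$ and the term of $\alpha$ is $\{x\}$, which is an instantiation of $\{X\}$. If $n$ is a parameter $\theta$, then $\vars(n) = \emptyset$ and the term of $\alpha$ is empty, the (trivial) instantiation of the empty set.

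For the inductive step, suppose first that $n$ is a $+$-node. Then $\alpha$ selects a single child $c$ together with a complete subcircuit $\alpha'$ rooted at $c$; by the inductive hypothesis the term of $\alpha'$ is an instantiation of $\vars(c)$, and by smoothness $\vars(c) = \vars(n)$, so the term of $\alpha$ (which coincides with that of $\alpha'$) is an instantiation of $\vars(n)$. Now suppose $n$ is a $*$-node with children $c_1,\ldots,c_k$. Then $\alpha$ selects a complete subcircuit $\alpha_i$ rooted at each $c_i$, and by the inductive hypothesis the term of $\alpha_i$ is an instantiation of $\vars(c_i)$. By decomposability the sets $\vars(c_1),\ldots,\vars(c_k)$ are pairwise disjoint, so the union of the terms of the $\alpha_i$ assigns exactly one value to each variable in $\bigcup_i \vars(c_i) = \vars(n)$, with no variable receiving two (conflicting) values. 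Hence the term of $\alpha$ is an instantiation of $\vars(n)$, completing the induction.

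I do not expect a genuine obstacle here; the only point requiring care is to keep the two properties in their proper roles. Decomposability is exactly what rules out a variable being assigned two conflicting values, and it is used only in the $*$-node case; smoothness is what guarantees that every variable of $\vars(n)$ actually appears in the term, and it is used in the $+$-node case together with condition~(1) at the root. Everything else is bookkeeping about $\vars(\cdot)$ and the anatomy of a complete subcircuit.
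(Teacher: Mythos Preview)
Your proof is correct and follows essentially the same approach as the paper: smoothness ensures each variable in $\X$ contributes at least one indicator to the term (handled at $+$-nodes and via condition~(1) at the root), and decomposability ensures no variable contributes more than one (handled at $*$-nodes). The paper's proof states these two observations directly in two sentences without spelling out the induction, whereas you make the structural induction explicit via the localized claim at each node $n$; the content is the same, and your version is simply a more detailed unfolding of the paper's compressed argument.
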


\begin{proof}
Given smoothness, every variable \(X \in \X\) must have at least one indicator \(\lambda_x\) in \(\alpha\) (no variables are dropped from the circuit if we keep only a single child of a \(+\)-node). Given decomposability, each variable \(X \in \X\) must have at most one indicator \(\lambda_x\) in \(\alpha\). Hence, \(\alpha\) will contain exactly one indicator for each variable \(X \in \X\). The term of \(\alpha\) is therefore an instantiation \(\x\) of variable \(\X\). 
\end{proof}
A complete subcircuit with term \(\x\) will be called an \de{\(\x\)-subcircuit.} Figure~\ref{fig:ac} depicts an \(ab\)-subcircuit (in red) and an \(\bar{a}b\)-subcircuit (in blue). 

In a decomposable and smooth circuit \(\ac(\X)\), every complete subcircuit is an \(\x\)-subcircuit for some instantiation \(\x\) of variables \(\X\). The circuit can then be treated as a collection of \(\x\)-subcircuits (multiple subcircuits can have the same term). Our proofs utilize this implication heavily.

\begin{definition}
An \de{input} \(\Lambda\) for arithmetic circuit \(\ac(\X)\) assigns a value in \(\{0,1\}\) to each circuit indicator \(\lambda_x\). An instantiation \(\x\) is \de{compatible} with a circuit input \(\Lambda\), denoted \(\x \sim \Lambda\), iff the input sets \(\lambda_x = 1\) when \(\x\) sets \(\eql(X,x)\).
\end{definition}
A circuit input can be viewed as the set of instantiations compatible with it. Consider the binary variables \(\X = \{A,B,C\}\) for an example. The circuit input
\[
\Lambda = 
\{\lambda_a = 1,\lambda_{\n(a)} = 0, \lambda_b = 1, \lambda_{\n(b)} = 0, \lambda_c = 1, \lambda_{\n(c)} = 0\}
\]
has a single compatible instantiation \(abc\). The input
\[
\Lambda = 
\{\lambda_a = 0,\lambda_{\n(a)} = 0, \lambda_b = 1, \lambda_{\n(b)} = 0, \lambda_c = 1, \lambda_{\n(c)} = 0\}
\]
has no compatible instantiations, and the circuit input:
\[
\Lambda = 
\{\lambda_a = 1,\lambda_{\n(a)} = 1, \lambda_b = 1, \lambda_{\n(b)} = 0, \lambda_c = 1, \lambda_{\n(c)} = 0\}
\]
has two compatible instantiations \(abc\) and \(\n(a)bc\).  In this latter case, evaluating the circuit at instantiation \(bc\), as discussed in Definition~\ref{def:ac}, leads to evaluating it under input \(\Lambda\).

The following lemma brings us one step away from showing why decomposability and smoothness force an AC that computes a factor to also compute the factor marginals.

\begin{lemma}\label{lem:sum}
Given a decomposable and smooth arithmetic circuit, let \(\theta_1, \ldots, \theta_m\) be the coefficients of complete subcircuits whose terms are compatible with circuit input \(\Lambda\). The circuit will evaluate to \(\theta_1 + \ldots + \theta_m\) under input \(\Lambda\).
\end{lemma}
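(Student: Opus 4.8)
The plan is to prove, by structural induction on the circuit, a slightly stronger ``local'' statement and then read off the lemma at the root. For each node \(n\), let \(\ac(\X_n)\) be the subcircuit rooted at \(n\), which is decomposable and smooth by Lemma~\ref{lem:induction}; I claim that, under the restriction of \(\Lambda\) to its indicators, \(\ac(\X_n)\) evaluates to the sum of the coefficients of those complete subcircuits of \(\ac(\X_n)\) whose terms are compatible with \(\Lambda\). By Lemma~\ref{lem:inst} every such term is an instantiation of \(\X_n\), so ``compatible with \(\Lambda\)'' is meaningful here. The lemma is the special case where \(n\) is the root.

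For the base cases: a leaf labelled by a parameter \(\theta\) evaluates to \(\theta\) and has a single complete subcircuit, whose term is empty (hence vacuously compatible with \(\Lambda\)) and whose coefficient is \(\theta\); a leaf labelled by an indicator \(\lambda_x\) evaluates to \(\Lambda(\lambda_x)\in\{0,1\}\) and has a single complete subcircuit with term \(\{x\}\) and coefficient \(1\) (the empty product), and \(\{x\}\) is compatible with \(\Lambda\) exactly when \(\Lambda(\lambda_x)=1\). In both cases the claimed identity holds.

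For the inductive step, consider first a \(+\)-node \(n\) with children \(c_1,\dots,c_k\): by Definition~\ref{def:subcircuit}, every complete subcircuit of \(n\) consists of \(n\) together with a complete subcircuit of exactly one child \(c_i\), inheriting its term and coefficient; since \(n\) evaluates to \(\sum_i(\text{value of }c_i)\), the identity for \(n\) follows by summing the inductive identities for the \(c_i\). For a \(*\)-node \(n\) with children \(c_1,\dots,c_k\), decomposability makes the sets \(\vars(c_i)\) pairwise disjoint, partitioning \(\X_n\); a complete subcircuit of \(n\) is \(n\) together with one complete subcircuit \(\alpha_i\) per child, its term is the disjoint union \(\bigcup_i\mathrm{term}(\alpha_i)\) (an instantiation of \(\X_n\) by Lemma~\ref{lem:inst}), and its coefficient is \(\prod_i\mathrm{coeff}(\alpha_i)\). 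Because the variable blocks are disjoint, the combined term is compatible with \(\Lambda\) iff each \(\mathrm{term}(\alpha_i)\) is. Since \(n\) evaluates to \(\prod_i(\text{value of }c_i)\), substituting the inductive identities and distributing the product of sums into a sum of products yields exactly the sum of \(\prod_i\mathrm{coeff}(\alpha_i)\) over tuples of pairwise-compatible subcircuits, i.e.\ the sum of coefficients over the compatible complete subcircuits of \(n\).

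I expect the \(*\)-node step to be the main obstacle: one must check that complete subcircuits of \(n\) biject with tuples \((\alpha_1,\dots,\alpha_k)\) in a way that simultaneously multiplies coefficients, takes disjoint unions of terms, and --- crucially --- makes ``compatible with \(\Lambda\)'' factor as a conjunction over \(i\). All three properties lean on decomposability (through Lemma~\ref{lem:inst} and the disjointness of the \(\vars(c_i)\)); without it a ``term'' could assign conflicting values to a variable and the bookkeeping would collapse. Smoothness is what guarantees, via Lemma~\ref{lem:inst}, that the terms are full instantiations, so that at the root the identity is phrased over instantiations of all of \(\X\).
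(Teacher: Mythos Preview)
Your proposal is correct and follows essentially the same approach as the paper: structural induction on the circuit (licensed by Lemma~\ref{lem:induction}), with the \(+\)-case handled via the union of children's complete subcircuits and the \(*\)-case via their Cartesian product under decomposability. The paper's proof is a terse sketch of exactly this argument; you have simply spelled out the base cases and the compatibility-factoring at \(*\)-nodes in more detail.
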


\begin{proof}
Given Lemma~\ref{lem:induction}, we will use induction on the circuit structure. The base case is a leaf circuit node (indicator or parameter). The lemma holds trivially in this case. The inductive case is when \(n\) is an internal circuit node with children \(c_1, \ldots, c_k\). Suppose the lemma hold for these children. If \(n\) is a \(*\)-node, the lemma holds for \(n\) by decomposability (the complete subcircuits of \(n\) correspond to the Cartesian product of complete subcircuits for its children). If \(n\) is a \(+\)-node, the lemma holds for \(n\) since the complete subcircuits of \(n\) correspond to the union of complete subcircuits of its children.
\end{proof}

\begin{corollary}\label{coro:sum}
Let \(\theta_1, \ldots, \theta_m\) be the coefficients of \(\x\)-subcircuits in a decomposable and smooth arithmetic circuit \(\ac(\X)\). We then have \(\ac(\x) = \theta_1 + \ldots + \theta_m\).
\end{corollary}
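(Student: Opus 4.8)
The plan is to obtain this as an essentially immediate specialization of Lemma~\ref{lem:sum}. First I would fix an arbitrary instantiation \(\x\) of \(\X\) and consider the particular circuit input \(\Lambda_\x\) that arises when we evaluate \(\ac(\x)\) according to Definition~\ref{def:ac}: namely, \(\Lambda_\x\) sets \(\lambda_x = 1\) precisely when \(\x\) sets \(\eql(X,x)\), and \(\lambda_x = 0\) otherwise. Since \(\x\) is a \emph{complete} instantiation of \(\X\), this is exactly the compatibility condition, so \(\x\) is the unique instantiation compatible with \(\Lambda_\x\), and evaluating the circuit under \(\Lambda_\x\) is by construction the same as computing \(\ac(\x)\).

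Next I would identify the complete subcircuits whose terms are compatible with \(\Lambda_\x\) with the \(\x\)-subcircuits. By Lemma~\ref{lem:inst}, decomposability and smoothness guarantee that every complete subcircuit \(\alpha\) has a term that is itself an instantiation \(\x'\) of \(\X\). Such a term is compatible with \(\Lambda_\x\) iff \(\Lambda_\x\) assigns \(1\) to every indicator \(\lambda_{x'}\) appearing in \(\alpha\), which by the definition of \(\Lambda_\x\) holds iff every value in \(\x'\) agrees with \(\x\); as \(\x'\) and \(\x\) are both complete instantiations of the same set \(\X\), this is equivalent to \(\x' = \x\). Hence the complete subcircuits whose terms are compatible with \(\Lambda_\x\) are exactly the \(\x\)-subcircuits, with coefficients \(\theta_1,\ldots,\theta_m\).

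Finally I would apply Lemma~\ref{lem:sum} to the input \(\Lambda_\x\): the circuit evaluates under \(\Lambda_\x\) to the sum of the coefficients of the complete subcircuits whose terms are compatible with \(\Lambda_\x\), i.e., to \(\theta_1 + \cdots + \theta_m\); combined with \(\ac(\x) = \ac(\Lambda_\x)\) this gives the claim. I do not expect any genuine obstacle here — the only point needing care is the bookkeeping that the input induced by a complete instantiation singles out precisely the subcircuits with that exact term, and this is exactly where Lemma~\ref{lem:inst} (ruling out terms that drop a variable or conflict on one) does the real work.
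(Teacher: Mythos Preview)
Your proposal is correct and matches the paper's intended argument: the corollary is stated without proof immediately after Lemma~\ref{lem:sum}, so the paper is treating it as exactly the specialization you describe---take the input \(\Lambda_\x\) induced by a complete instantiation \(\x\), use Lemma~\ref{lem:inst} to see that the compatible complete subcircuits are precisely the \(\x\)-subcircuits, and apply Lemma~\ref{lem:sum}. You have simply spelled out the bookkeeping that the paper leaves implicit.
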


We are now ready for the key result we are after.
\begin{theorem}\label{theo:polynomial}
Consider an arithmetic circuit \(\ac(\X)\) that computes factor \(f(\X)\). If the circuit is decomposable and smooth, then it also computes the marginals of factor \(f(\X\)).
\end{theorem}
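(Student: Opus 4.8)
The plan is to unpack both sides of the target identity $\ac(\y) = f(\y)$ using the subcircuit machinery already built up, and to match them up. By Definition~\ref{def:computes}, the circuit computes the factor marginals iff $\ac(\y) = f(\y)$ for every instantiation $\y$ of every $\Y \subseteq \X$, where $f(\y) = \sum_{\z} f(\y,\z)$ and $\Z = \X \setminus \Y$. So I would fix such a $\Y$ and $\y$ and aim to write $\ac(\y)$ as exactly that sum.

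First I would identify the circuit input $\Lambda$ induced by evaluating $\ac$ at $\y$ as in Definition~\ref{def:ac}: this is the input that sets $\lambda_x = 1$ precisely when $x$ is compatible with $\y$ (i.e., $X \notin \Y$, or $X \in \Y$ and $\y$ sets $X = x$). The key elementary observation is that a \emph{complete} instantiation $\x$ of $\X$ satisfies $\x \sim \Lambda$ if and only if $\x$ agrees with $\y$ on $\Y$, that is, $\x = (\y,\z)$ for some instantiation $\z$ of $\Z$. Then I would invoke Lemma~\ref{lem:sum}: under $\Lambda$ the circuit evaluates to the sum of coefficients of exactly those complete subcircuits whose terms are compatible with $\Lambda$. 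Since the circuit is decomposable and smooth, Lemma~\ref{lem:inst} says every such term is a full instantiation $\x$ of $\X$, and by the observation above it is compatible with $\Lambda$ precisely when $\x = (\y,\z)$ for some $\z$.

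Grouping these subcircuits by the $\z$-part of their term gives
\[
\ac(\y) = \sum_{\z}\Big(\text{sum of coefficients of }(\y,\z)\text{-subcircuits}\Big),
\]
where the grouping is exhaustive and disjoint because each complete subcircuit has a single, well-defined term and hence a single $\z$-extension, and every $(\y,\z)$-subcircuit is compatible with $\Lambda$. By Corollary~\ref{coro:sum} the inner sum is exactly $\ac(\y,\z)$, and since the circuit computes the factor $f(\X)$, this equals $f(\y,\z)$. Therefore $\ac(\y) = \sum_{\z} f(\y,\z) = f(\y)$, which is the claim.

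I expect the only delicate point to be the bookkeeping in the middle step: carefully reconciling the notion of a subcircuit term being ``compatible with a circuit input'' (Definition preceding Lemma~\ref{lem:sum}) with the notion of one instantiation extending another, and confirming that the final grouping over $\z$ is both exhaustive and non-overlapping. Everything else is a direct application of Lemmas~\ref{lem:inst} and~\ref{lem:sum}, Corollary~\ref{coro:sum}, and the hypothesis that the circuit computes $f$.
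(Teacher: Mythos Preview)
Your proposal is correct and follows essentially the same route as the paper's own proof: identify the input $\Lambda$ corresponding to $\y$, apply Lemma~\ref{lem:sum} to expand $\ac(\y)$ as a sum of subcircuit coefficients, group these by complete instantiation, and use Corollary~\ref{coro:sum} together with the hypothesis $\ac(\x)=f(\x)$ to rewrite each group as $f(\y,\z)$. The only cosmetic difference is that the paper enumerates the compatible instantiations as $\x_1,\ldots,\x_m$ rather than parameterizing them as $(\y,\z)$, and leaves the appeal to Lemma~\ref{lem:inst} implicit.
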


\begin{proof}
Consider an instantiation \(\y\) of some variables \(\Y \subseteq \X\), let \(\x_1, \ldots, \x_m\) be all instantiations of variables \(\X\) that are compatible with \(\y\), and let \(\Lambda\) be the circuit input corresponding to these instantiations. Let \(\theta_i\) be the sum of coefficients of all \(\x_i\)-subcircuits. Since the circuit computes factor \(f(\X)\), we have \(\ac(\x_i) = f(\x_i)\) and, hence, \(f(\x_i) = \theta_i\) by Corollary~\ref{coro:sum}. By Lemma~\ref{lem:sum}, the circuit evaluates to  \(\theta_1 + \ldots + \theta_m\) under input \(\Lambda\), which is \(f(\x_1) + \ldots + f(\x_m) = f(\y)\). Hence, the circuit computes the factor marginals. 
\end{proof}

This theorem justifies the standard algorithm for computing marginals on arithmetic circuits, in linear time, as proposed in~\cite{Darwiche03}---that is, by simply evaluating the circuit as in Definition~\ref{def:ac}. In that work, however, the property of determinism was also assumed (discussed in the next section). Determinism is not necessary though for computing marginals as initially observed in \cite{PoonD11}.\footnote{\cite{PoonD11} introduced sum-product networks (SPNs), which are equivalent to decomposable and smooth ACs. More precisely, each can be converted into the other in linear time and space \cite{RooshenasL14}. The conversion is straightforward and amounts to adjusting for graphical notation.}  Our proof above uses different tools than those used in \cite{PoonD11} and is set in a more general context. Moreover, these tools and associated lemmas turn out to be essential for the rest of our treatment on the role of determinism, which we discuss in the next section.

As for the necessity of smoothness, consider the circuits \(\ac_1(A,B) = \lambda_a \lambda_b + \lambda_{\n(a)}\) and \(\ac_2(A,B) = \lambda_a \lambda_b + \lambda_{\n(a)} (\lambda_b + \lambda_{\n(b)}).\) Both circuits are decomposable and compute the same factor: \(f(a,b) = 1\), \(f(a,\n(b)) = 0\), \(f(\n(a),b) = 1\) and \(f(\n(a),\n(b)) = 1\). However, circuit \(\ac_1\) is not smooth while \(\ac_2\) is smooth.  Only \(\ac_2\) is guaranteed to compute factor marginals by Theorem~\ref{theo:polynomial}. For example, evaluating \(\ac_1\) at instantiation \(\n(a)\) gives \(\ac_1(\n(a)) = 0 \cdot 1 + 1 = 1 \neq f(\n(a)) = 2\) according to Definition~\ref{def:ac}, while \(\ac_2(\n(a)) = f(\n(a))\).\footnote{Theorem~1 of~\cite{FriesenD16} implies that factor marginals can be computed in time linear in the size of an arithmetic circuit, when the circuit is decomposable but not smooth. This complexity is not justified (but assumed) in the proof of the theorem. In fact, we are unaware of any justified algorithm that attains this complexity without smoothness; see also Footnote~\ref{footnote:smooth}.}

Before we discuss determinism, we note that decomposability and determinism were exploited recently in tractable, propositional reasoning within a semi-ring setting; initially in \cite{Kimmig12,KimmigJAL16}, then followed by \cite{FriesenD16}.

\section{Determinism} \label{sec:determinism}

The property of {\em determinism} \cite{darwiche01tractable} was employed for probabilistic reasoning in~\cite{Darwiche03} by compiling Bayesian networks into arithmetic circuits that are deterministic. It was also enforced by the algorithm of \cite{LowdD08} for learning arithmetic circuits. The property was later called {\em selectivity} in the works on SPNs, initially in \cite{peharzlearning}.

Using the terminology of our current formulation, the original definition of determinism would amount to this: An arithmetic circuit  is deterministic iff the terms of each two of its complete subcircuits are conflicting. We will adopt a weaker definition, which allows conflicting subcircuits as long as at most one of them has a non-zero coefficient.
\begin{definition}[Determinism]
An arithmetic circuit \(\ac(\X)\) is \de{deterministic} iff each \(+\)-node has at most one non-zero input when the circuit is evaluated under any instantiation \(\x\) of variables \(\X\).
\end{definition}

As mentioned earlier, the original proposal for using arithmetic circuits as a tractable representation of probability distributions~\cite{Darwiche03} ensured that these circuits are deterministic, in addition to being decomposable and smooth. Moreover, several methods were proposed in~\cite{Darwiche03} for compiling Bayesian networks into ACs with these properties. One of these methods ensures that the size of the AC is proportional to the size of a jointree for the network. Another method yields circuits that can sometimes be exponentially smaller, and is implemented in the publicly available \ace\ system \cite{Chavira.Darwiche.Aij.2008}; see also \cite{UAIEvaluation08} for an empirical evaluation of this system in one of the UAI inference evaluations.

While determinism is not needed to compute factor marginals, it is needed for the correctness of the linear-time MPE algorithm of \cite{ChanD06}. This was missed in some earlier works \cite{PoonD11}, which used this algorithm on non-deterministic ACs (i.e., SPNs) without realizing that it is no longer correct. This oversight was noticed in later works \cite{Peharz16,MauaC17}.\footnote{\cite{Peharz16} proposed a polytime algorithm that converts an SPN into one that is deterministic and smooth (called an \emph{augmented SPN}), but this new SPN computes a different factor than the one computed by the original SPN. Hence, its MPEs cannot be generally converted into MPEs of the original SPN.} We next reveal the reason why computing MPE without determinism is hard. Later in the section, we reveal the reason why the MPE algorithm of \cite{ChanD06} fails without determinism.

The key observation is this. Consider variables \(\X\) which are partitioned into \(\Y\) and \(\Z\). Given a decomposable and smooth arithmetic circuit \(\ac(\X)\) that computes factor \(f(\X)\), one can obtain in linear time a decomposable and smooth \(\ac(\Y)\) that computes the projection \(\sum_\Z f(\X)\). This is achieved by simply setting all indicators \(\lambda_z\) to \(1\); see \cite{darwicheJACM-DNNF} for the root of this observation. Moreover, an MPE for the projection \(\sum_\Z f(\X)\) is a MAP for the original factor \(f(\X)\). Hence, a polytime MPE algorithm implies a polytime MAP algorithm on decomposable and smooth ACs. We know, however, that Naive Bayes networks have linear-size decomposable and smooth ACs, while MAP is hard on these networks \cite{Campos11}. Therefore, the existence of a polytime MPE algorithm on such circuits will contradict standard complexity assumptions. These observations can be abstracted into the following lemma, which succinctly and intuitively explains why MPE is not tractable on decomposable and smooth circuits.

\begin{lemma}
A circuit representation that supports projection and MPE in polytime also supports MAP in polytime.
\end{lemma}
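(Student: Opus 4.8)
The plan is to exhibit a direct reduction from MAP to the composition of projection and MPE. Recall that the MAP of a factor \(f(\X)\) over variables \(\Y\) (with \(\X\) partitioned into \(\Y\) and \(\Z\)) is \(\argmax_\y \sum_\z f(\y,\z)\), while the projection of \(f(\X)\) on \(\Y\) is the factor \(g(\Y) = \sum_\Z f(\X)\), i.e., the factor with \(g(\y) = \sum_\z f(\y,\z)\). The key observation is then immediate: \(\argmax_\y g(\y) = \argmax_\y \sum_\z f(\y,\z)\), so an MPE of the projection \(g\) is exactly a MAP of \(f\) over \(\Y\).

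Given this, the algorithm is: (i) from the circuit representing \(f(\X)\), use the (polytime) projection operation to obtain a circuit representing \(g = \sum_\Z f(\X)\); (ii) apply the (polytime) MPE algorithm to this new circuit to obtain \(\argmax_\y g(\y)\); (iii) return the result as the MAP of \(f\) over \(\Y\). Step (i) runs in polytime by hypothesis, hence the circuit it produces has size polynomial in the size of the input circuit; step (ii) then runs in polytime in the size of that circuit, hence in polytime in the size of the original input. Composing two polynomial-time procedures yields a polynomial-time procedure, which establishes the lemma.

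There is essentially no hard step here---the content is entirely in the bookkeeping that the circuit output by the projection operation is a faithful representation of the projected factor \(g\) (so that running MPE on it is meaningful) and that its size is polynomially bounded (so that the second call remains polytime). Both are guaranteed by the premise that the representation ``supports projection in polytime'': producing a superpolynomially large circuit would already violate the polytime bound. For the concrete case of decomposable and smooth arithmetic circuits, step (i) is the linear-time operation of setting every indicator \(\lambda_z\) (for \(z\) a value of a variable in \(\Z\)) to \(1\), as noted earlier following \cite{darwicheJACM-DNNF}; by Theorem~\ref{theo:polynomial} the resulting circuit computes the marginals---and in particular the values---of \(g\), so that applying the MPE algorithm to it is precisely applying MPE to the projection.
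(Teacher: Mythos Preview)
Your proposal is correct and follows essentially the same approach as the paper: the lemma is stated in the paper as an abstraction of the observation immediately preceding it, namely that projecting out \(\Z\) and then computing an MPE over \(\Y\) yields a MAP over \(\Y\). Your write-up is slightly more explicit about the size bookkeeping (that a polytime projection must produce a polysize output, so the subsequent MPE call remains polytime), but the argument is the same.
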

Note that deterministic, decomposable and smooth ACs do not support projection in polytime so the above argument does not apply in this case (setting indicators \(\lambda_z\) to \(1\) will generally destroy determinism).

More formally, let \(\ac(\X)\) be a decomposable and smooth arithmetic circuit that computes a factor \(f(\X)\).  For a given value \(k\), consider the decision problems:
\begin{itemize}
\item \textbf{D-MPE-AC}: Is there an instantiation \(\x\) where \(\ac(\x) > k\)?
\item \textbf{D-MAP-AC}: Is there an instantiation \(\y\) where \(\sum_\z \ac(\y,\z) > k\)? (\(\X\) is partitioned into \(\Y\) and \(\Z\)).
\end{itemize}

We now have the following result, whose proof expands the one given in~\cite{Peharz16} for SPNs based on the above observations; 
see also~\cite{MauaC17} for an in-depth discussion of MPE hardness on SPNs.
\begin{theorem}\label{theo:mpe-1}
The problem D-MPE-AC is NP-complete.
\end{theorem}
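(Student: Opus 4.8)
The plan is to establish membership in \(\mathrm{NP}\) and \(\mathrm{NP}\)-hardness separately.

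For membership, a certificate is simply an instantiation \(\x\) of \(\X\). Given \(\x\), we evaluate \(\ac(\x)\) in time linear in the circuit size (Definition~\ref{def:ac}) and accept iff \(\ac(\x) > k\); since the circuit computes \(f\), we have \(\ac(\x) = f(\x)\), so an accepting certificate exists iff the answer to D-MPE-AC is yes. (One only needs the routine observation that the parameters and \(k\) are given with polynomially many bits and that the arithmetic can be carried out exactly, so the check is polynomial.)

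For hardness, I would reduce from the decision version of MAP on Naive Bayes networks, which is \(\mathrm{NP}\)-complete~\cite{Campos11} (membership in \(\mathrm{NP}\) is immediate, since \(P_N(\y)=\sum_c P_N(c,\y)\) is a polynomial-size sum). Given a Naive Bayes network \(N\) with class variable \(C\), feature variables \(\Y=\{F_1,\dots,F_n\}\), and threshold \(k\), the reduction first builds the arithmetic circuit
\[
\sum_{c}\Bigl[\theta_c\,\lambda_c\prod_{i=1}^{n}\bigl(\textstyle\sum_{f_i}\theta_{f_i\mid c}\,\lambda_{f_i}\bigr)\Bigr],
\]
which computes the joint \(P_N(C,\Y)\), has size linear in that of \(N\), and is easily verified to be decomposable and smooth. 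It then projects out \(C\) by setting every indicator \(\lambda_c\) to the constant \(1\); by the observation used earlier in the paper (tracing to~\cite{darwicheJACM-DNNF}), the resulting circuit \(\mathcal{A}(\Y)\) is still decomposable and smooth and computes the factor \(f(\Y)=\sum_c P_N(c,\Y)=P_N(\Y)\). The reduction outputs the pair \((\mathcal{A},k)\), which is produced in linear time.

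Correctness is then immediate: there is an instantiation \(\y\) with \(\mathcal{A}(\y)>k\) iff there is a \(\y\) with \(P_N(\y)>k\) iff \(N\) is a yes-instance of D-MAP; hence D-MPE-AC is \(\mathrm{NP}\)-hard, and therefore \(\mathrm{NP}\)-complete. The one point that deserves care — and the crux of why the reduction works at all — is that projecting out \(C\) destroys determinism: under a complete instantiation of \(\Y\) the outer \(+\)-node generally has several non-zero inputs, which is exactly why MPE on \(\mathcal{A}\) reproduces the intractable MAP query of \(N\) rather than the (easy) MPE query of \(N\). I would therefore check carefully that decomposability and smoothness genuinely survive the removal of the \(\lambda_c\) indicators, and that the displayed Naive Bayes circuit is correctly specified so that no feature variable is lost in the process.
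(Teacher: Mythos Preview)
Your proof is correct and follows essentially the same approach as the paper: membership via circuit evaluation on a guessed instantiation, and hardness by reducing decision MAP on naive Bayes networks to D-MPE-AC via projecting out the class variable from a decomposable and smooth AC for the joint. The only cosmetic difference is that the paper invokes a generic compilation result \cite{Chavira.Darwiche.Aij.2008} to obtain the naive Bayes circuit, whereas you write it down explicitly; your version is more self-contained and makes the preservation of decomposability and smoothness after setting the \(\lambda_c\) to \(1\) easy to verify directly.
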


\begin{proof}
Given instantiation \(\x\) and value \(k\), we can test whether \(f(\x) > k\) by evaluating the circuit \(\ac\) in time linear in the size of the circuit.  Hence, the problem is in NP.  To show NP-hardness, we reduce the (decision) problem of computing (partial) MAP in a naive Bayes network, which is NP-complete \cite{Campos11}, to MPE in a decomposable and smooth arithmetic circuit.  Suppose we have a naive Bayes network with a root node \(X_0\) and leaf nodes \(\X\), and inducing a distribution \(\pr(X_0,\X)\).  We can compile this network into a polysize decomposable, deterministic and smooth arithmetic circuit \(\ac_0(X_0,\X)\) that computes \(\pr(X_0,\X)\), e.g., as in \cite{Chavira.Darwiche.Aij.2008}.  We can sum-out variable \(X_0\) in the circuit \(\ac_0(X_0,\X)\) by setting the indicators of \(X_0\) to one.  The resulting circuit \(\ac_1(\X)\) is decomposable and smooth, and computes the (marginals of) factor \(\sum_{X_0} \pr(X_0,\X)\).  For a given value \(k\), there exists an instantiation \(\x\) where \(\ac_1(\x) > k\) iff there exists an instantiation \(\x\) where \(\sum_{x_0} \pr(x_0,\x) > k\), which is an NP-complete problem \cite{Campos11}.
\end{proof}

\begin{corollary}\label{cor:map}
The problem D-MAP-AC is NP-complete.
\end{corollary}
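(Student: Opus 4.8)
The plan is to show the two halves separately: D-MAP-AC lies in NP for the same reason D-MPE-AC does, and D-MPE-AC is a special case of D-MAP-AC, so NP-hardness transfers immediately from Theorem~\ref{theo:mpe-1}.

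First, for membership in NP, I would take a candidate instantiation \(\y\) of \(\Y\) as the certificate. The point is that the verification quantity \(\sum_\z \ac(\y,\z)\) is computable in time linear in the circuit size: evaluating \(\ac\) under the input that sets every indicator \(\lambda_z\) (for values \(z\) of variables in \(\Z\)) to \(1\) and sets the indicators of \(\Y\) according to \(\y\) is precisely the evaluation \(\ac(\y)\) of Definition~\ref{def:ac}. Since \(\ac\) is decomposable and smooth and computes \(f(\X)\), Theorem~\ref{theo:polynomial} gives \(\ac(\y) = f(\y) = \sum_\z f(\y,\z) = \sum_\z \ac(\y,\z)\), the last equality again using that \(\ac\) computes \(f\). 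Hence we can check \(\sum_\z \ac(\y,\z) > k\) in polytime, so D-MAP-AC is in NP.

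Second, for NP-hardness, I would note that D-MPE-AC is exactly the restriction of D-MAP-AC to instances with \(\Z = \emptyset\) (equivalently \(\Y = \X\)): in that case \(\sum_\z \ac(\y,\z) = \ac(\y)\), and the question ``is there \(\y\) with \(\sum_\z \ac(\y,\z) > k\)'' becomes ``is there \(\x\) with \(\ac(\x) > k\)'', which is D-MPE-AC. So the identity map is a polytime reduction from D-MPE-AC to D-MAP-AC, and NP-hardness of D-MAP-AC follows from Theorem~\ref{theo:mpe-1}. Alternatively, one can rerun the naive-Bayes reduction from the proof of Theorem~\ref{theo:mpe-1} but keep \(X_0\) as a summed-out variable (i.e., place \(X_0\) in \(\Z\)), which derives the same conclusion directly from \cite{Campos11}.

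There is essentially no obstacle here; the corollary is immediate once Theorems~\ref{theo:polynomial} and~\ref{theo:mpe-1} are available. The only step requiring a moment of care is NP-membership: for a general (non-decomposable, non-smooth) circuit, computing \(\sum_\z \ac(\y,\z)\) is not obviously in P, so the argument genuinely leans on decomposability and smoothness via Theorem~\ref{theo:polynomial}—the very same reason the MAP-hardness argument does not collapse for deterministic circuits, since summing out \(\Z\) preserves decomposability and smoothness but not determinism.
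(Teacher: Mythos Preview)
Your proposal is correct and essentially matches the paper's own proof: both establish NP-membership by observing that \(\sum_\z \ac(\y,\z)\) is obtained by a single linear-time evaluation \(\ac(\y)\) of the decomposable and smooth circuit, and both obtain NP-hardness from the naive-Bayes MAP reduction of Theorem~\ref{theo:mpe-1}. The only cosmetic difference is that your primary hardness argument observes D-MPE-AC is literally the \(\Z=\emptyset\) special case of D-MAP-AC (a clean one-line reduction), whereas the paper phrases it as re-running the naive-Bayes reduction---which is exactly the alternative you also mention.
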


The following lemma reveals the precise role of determinism, which stands behind the correctness of the linear-time MPE algorithm of \cite{ChanD06}. It basically shows a one-to-one correspondence between the non-zero rows of the factor computed by a circuit and the complete subcircuits with non-zero coefficients. 
\begin{lemma} \label{lem:determinism}
Consider an arithmetic circuit \(\ac(\X)\) that computes factor \(f(\X)\) and is deterministic, decomposable and smooth (hence, can be viewed as a collection of \(\x\)-subcircuits). For each instantiation \(\x\), we have:
\begin{itemize}
\item[(a)] If the circuit has two distinct \(\x\)-subcircuits, one of them must have a zero coefficient.
\item[(b)] If \(f(\x) > 0\), the circuit contains a unique \(\x\)-subcircuit with coefficient \(f(\x)\).
\end{itemize}
\end{lemma}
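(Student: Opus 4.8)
The plan is to prove a single statement by structural induction on the circuit and then read off both parts: \emph{for every node $n$ and the restriction $\x'$ of $\x$ to the variables $\vars(n)$ (well-defined as an instantiation of $\vars(n)$ by Lemma~\ref{lem:inst}), at most one complete subcircuit of $n$ with a nonzero coefficient has term $\x'$.} Here a coefficient is nonzero exactly when every parameter appearing in the subcircuit is strictly positive, since parameters are non-negative. Throughout, $\Lambda_\x$ denotes the circuit input induced by the full instantiation $\x$ as in Definition~\ref{def:ac}; note that ``evaluating the circuit under instantiation $\x$'' in the determinism property means evaluating it under $\Lambda_\x$.

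For the induction I would invoke Lemma~\ref{lem:induction} to treat each node $n$ as the root of a decomposable and smooth circuit $\ac(\X_n)$. The base case of leaf nodes is immediate. If $n$ is a $*$-node with children $c_1,\ldots,c_k$, decomposability makes $\vars(c_1),\ldots,\vars(c_k)$ a partition of $\vars(n)$, and (as in the proof of Lemma~\ref{lem:sum}) the complete subcircuits of $n$ are exactly the ``Cartesian products'' of complete subcircuits of the $c_i$: such a product has term $\x'$ iff each factor has term equal to the restriction of $\x'$ to $\vars(c_i)$, and its coefficient is the product of the factors' coefficients, hence nonzero iff every factor's is. So the induction hypothesis applied to each child yields at most one admissible factor per child, hence at most one admissible product. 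If $n$ is a $+$-node, smoothness gives $\vars(c)=\vars(n)$ for every child $c$, and the complete subcircuits of $n$ with term $\x'$ are precisely those of its children with term $\x'$ (the node $n$ contributes no indicator and no parameter). By Lemma~\ref{lem:sum} applied to $\ac(\X_c)$, together with Lemma~\ref{lem:inst}, the value of child $c$ under $\Lambda_\x$ equals the sum of coefficients of its complete subcircuits with term $\x'$, which — all coefficients being non-negative — is nonzero iff at least one of those subcircuits has a nonzero coefficient. Determinism forces at most one child to have a nonzero value under $\Lambda_\x$, so at most one child has a nonzero-coefficient complete subcircuit with term $\x'$; applying the induction hypothesis to that single child closes the case.

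Part (a) is then the special case $n = \text{root}$, where $\vars(n)=\X$ and $\x'=\x$: at most one $\x$-subcircuit has a nonzero coefficient, so of any two distinct $\x$-subcircuits at least one has coefficient zero. For part (b), suppose $f(\x) > 0$. By Corollary~\ref{coro:sum}, $\ac(\x)$ equals the sum of the coefficients of all $\x$-subcircuits, and since the circuit computes $f$ this sum is $f(\x) > 0$; in particular the set of $\x$-subcircuits is nonempty and at least one of them has a nonzero coefficient. By part (a) at most one does, so exactly one $\x$-subcircuit $\alpha$ has a nonzero coefficient while every other has coefficient $0$; hence the coefficient of $\alpha$ equals the whole sum, namely $f(\x)$. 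Since $f(\x) > 0$ and every other $\x$-subcircuit has coefficient $0 \ne f(\x)$, $\alpha$ is the unique $\x$-subcircuit with coefficient $f(\x)$.

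I expect the main obstacle to be the $*$-node step of the induction: making precise that, under decomposability, the complete subcircuits of a $*$-node correspond bijectively to tuples of complete subcircuits of its children whose terms are the compatible pieces of $\x'$, with coefficients multiplying along this correspondence. This is routine but is the one place where the combinatorial structure of complete subcircuits must be unpacked; the $+$-node step, by contrast, is where determinism is used, and it goes through cleanly once the value of a node under $\Lambda_\x$ is identified via Lemma~\ref{lem:sum} with the sum of coefficients of the appropriately-termed complete subcircuits.
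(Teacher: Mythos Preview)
Your proof is correct, but it takes a genuinely different route from the paper's for part~(a). The paper argues directly by contradiction: if two distinct \(\x\)-subcircuits \(\alpha_1,\alpha_2\) both had nonzero coefficients, then since complete subcircuits are determined by their choices at \(+\)-nodes, there is a \(+\)-node where they select distinct children \(c_1,c_2\); each \(\alpha_i\) restricted below \(c_i\) is a complete subcircuit of \(c_i\) with term contained in \(\x\) and nonzero coefficient, so (implicitly via Lemma~\ref{lem:sum}) each \(c_i\) evaluates to a positive number under \(\Lambda_\x\), contradicting determinism. Your approach instead proves by structural induction the stronger local statement that at \emph{every} node \(n\) there is at most one nonzero-coefficient complete subcircuit with the appropriate term, using determinism only at the \(+\)-node step. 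Your route is longer but more explicit about the combinatorics (you spell out the Cartesian-product structure at \(*\)-nodes and the identification of a child's value with the relevant coefficient sum), and it yields a reusable per-node statement; the paper's route is shorter and isolates the single global obstruction (the offending \(+\)-node) without any induction. For part~(b) the two arguments coincide: both combine part~(a) with Corollary~\ref{coro:sum} and the hypothesis that the circuit computes \(f\).
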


\begin{proof}
To prove (a), suppose the circuit contains two distinct \(\x\)-subcircuits \(\alpha_1\) and \(\alpha_2\) that have non-zero coefficients. We will now establish a contradiction. Since \(\alpha_1\) and \(\alpha_2\) are distinct, each \(\alpha_i\) must include a distinct child \(c_i\) of some \(+\)-node in the circuit. If we evaluate the circuit at instantiation \(\x\), both \(c_1\) and \(c_2\) will have non-zero values. Hence, the circuit cannot be deterministic, which is a contradiction. To prove (b), suppose \(f(\x) > 0\) and let \(\alpha_1, \ldots, \alpha_m\) be all \(\x\)-subcircuits. At most one \(\alpha_i\) can have a non-zero coefficient by~(a). Since the circuit computes the factor, it must evaluate to \(f(\x)\) under instantiation \(\x\). Hence, exactly one \(\alpha_i\) has \(f(\x)\) as its coefficient.
\end{proof}

Lemma~\ref{lem:determinism} allows us to prove the correctness of the MPE algorithm given by \cite{ChanD06} under the more general setting we have in this paper.  This original algorithm is based on converting a deterministic, decomposable and smooth AC that computes a distribution \(\pr(\X)\) into a \emph{maximizer circuit.} Evaluating this circuit under evidence \(\y\), as in Definition~\ref{def:ac}, gives the MPE value \(\argmax_{\x \sim \y} \pr(\x)\).

An arithmetic circuit \(\ac(\X)\) is converted into a maximizer circuit, denoted \(\ac_{\max}(\X)\), by replacing every \(+\)-node with a \(\max\)-node.  The complete subcircuits of \(\ac_{\max}\) are defined as in Definition~\ref{def:subcircuit}, but where exactly one child of each visited \(\max\)-node is selected.

\begin{theorem} \label{theo:mpe}
Let \(\ac(\X)\) be a deterministic, decomposable and smooth arithmetic circuit that computes a factor \(f(\X)\) and let \(\ac_{\max}(\X)\) be its maximizer circuit. Then \(\ac_{\max}(\y) = \max_{\x \sim \y} f(\x)\) for \(\Y \subseteq \X\).
\end{theorem}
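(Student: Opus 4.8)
The plan is to mirror the chain of lemmas leading to Theorem~\ref{theo:polynomial}, systematically replacing sums by maxima, and then to invoke determinism only at the very end to collapse each maximum back to a single factor value.

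First I would note that replacing every $+$-node by a $\max$-node alters neither the graph structure nor $\vars(n)$ for any node $n$, so $\ac_{\max}(\X)$ is again decomposable and smooth, and (since a $\max$-node in a maximizer circuit plays exactly the structural role that a $+$-node plays in an ordinary circuit) its complete subcircuits are in bijection with those of $\ac(\X)$, preserving both term and coefficient. In particular, by Lemma~\ref{lem:inst} every complete subcircuit of $\ac_{\max}$ is an $\x$-subcircuit for some instantiation $\x$ of $\X$, and $\ac$ and $\ac_{\max}$ share the same collection of $\x$-subcircuits.

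Next I would prove the $\max$-analogue of Lemma~\ref{lem:sum}: evaluated under any circuit input $\Lambda$, the circuit $\ac_{\max}$ returns the maximum of the coefficients of those complete subcircuits whose terms are compatible with $\Lambda$, under the convention that an empty maximum is $0$ (which also covers an indicator $\lambda_x$ set to $0$, the leaf base case). The proof is induction on circuit structure via Lemma~\ref{lem:induction}, exactly as for Lemma~\ref{lem:sum}: the leaf cases are trivial, and the $\max$-node case is immediate because a $\max$-node's complete subcircuits are the union of its children's. The $*$-node case is where the only real work lies: decomposability makes a $*$-node's complete subcircuits the Cartesian product of its children's, and one must verify the identity $\prod_i \max_j \theta^{(i)}_j = \max_{j_1,\ldots,j_k} \prod_i \theta^{(i)}_{j_i}$ — precisely the point where non-negativity of the parameters (hence of all coefficients, which are products of parameters) is essential. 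Specializing to a complete instantiation $\x$ then yields, in analogy with Corollary~\ref{coro:sum}, that $\ac_{\max}(\x)$ equals the maximum of the coefficients of all $\x$-subcircuits.

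Determinism enters only now. Because $\ac(\X)$ is deterministic, decomposable and smooth and computes $f$, Lemma~\ref{lem:determinism} says that for each $\x$ at most one $\x$-subcircuit has a non-zero coefficient and, when $f(\x) > 0$, that coefficient is $f(\x)$. Hence for every $\x$ the maximum of the $\x$-subcircuit coefficients equals their sum (each equals the unique non-zero coefficient, or $0$ if all vanish), which by Corollary~\ref{coro:sum} is $\ac(\x) = f(\x)$; combined with the previous paragraph this gives $\ac_{\max}(\x) = f(\x)$ for every complete $\x$. Finally, for a partial instantiation $\y$ with $\Y \subseteq \X$ and induced input $\Lambda$, the complete subcircuits compatible with $\Lambda$ are exactly the $\x$-subcircuits with $\x \sim \y$; grouping them by term and applying the $\max$-analogue of Lemma~\ref{lem:sum} gives
\[
\ac_{\max}(\y) = \max_{\x \sim \y} \ac_{\max}(\x) = \max_{\x \sim \y} f(\x),
\]
which is the claim. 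I expect the $*$-node step of the $\max$-analogue of Lemma~\ref{lem:sum} — the ``max over products of non-negative reals'' identity together with the Cartesian-product bookkeeping and the empty-maximum convention — to be the only part requiring genuine care; everything else is a transcription of the sum-based argument.
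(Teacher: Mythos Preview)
Your proof is correct and follows essentially the same approach as the paper: both establish by structural induction that the maximizer circuit outputs the maximum coefficient among compatible complete subcircuits, and both use Lemma~\ref{lem:determinism} to identify that maximum with $\max_{\x\sim\y} f(\x)$. Your organization is slightly different---you prove the $\max$-analogue of Lemma~\ref{lem:sum} in full generality first and defer determinism to the end, whereas the paper invokes determinism up front and folds the induction into an ``MPE-subcircuit'' narrative---but the ingredients and the inductive argument are the same, and your version is arguably more explicit about where non-negativity and the empty-max convention are needed.
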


\begin{proof}
By Lemma~\ref{lem:determinism}, there is a one-to-one correspondence between the non-zero rows of factor \(f(\X)\) and \(\x\)-subcircuits with non-zero coefficients.  Let \(\theta_1,\ldots,\theta_m\) be the coefficients of \(\x\)-subcircuits, where \(\x\) is compatible with \(\y\). Hence,  \(\max \{\theta_1,\ldots,\theta_m\} = \max_{\x \sim \y} f(\x)\). That is, the MPE value is a coefficient of some \(\x\)-subcircuit---call it an MPE-subcircuit. We will think of the algorithm as composing an MPE-subcircuit in addition to computing its coefficient and show that \(\ac_{\max}(\y)=\max \{\theta_1,\ldots,\theta_m\}\) by induction on the circuit structure (see Lemma~\ref{lem:induction}). The base case trivially holds for leaf circuit nodes (indicators and parameters).  Assume \(n\) is an internal circuit node and the above equality holds for its children \(c_1,\ldots,c_k\) having MPE-subcircuits \(\alpha_i\) and coefficient \(\eta_i\). If \(n\) is a \(*\)-node, then by decomposability, an MPE-subcircuit for \(n\) can be found by joining \(\alpha_1,\ldots,\alpha_k\) with \(\eta_1 * \ldots * \eta_k\) as its coefficient. If \(n\) is a \(\max\)-node, then by determinism, an MPE-subcircuit for \(n\) can be found from the \(\alpha_i\) with the largest \(\eta_i\) with \(\max_{i=1}^k \eta_i\) as its coefficient.
\end{proof}

Once a maximizer circuit is evaluated to \(\theta\), one can identify an \(\x\)-subcircuit that has \(\theta\) as its coefficient, with \(\x\) being an MPE instantiation; see \cite{ChanD06}.\footnote{Smoothness is not strictly needed for this algorithm, except that it ensures that a full variable instantiation is returned.}

Without determinism, a circuit may have multiple \(\x\)-subcircuits for a given \(\x\), each having a non-zero coefficient. By Corollary~\ref{coro:sum}, the value of \(\x\), \(\ac(\x)=f(\x)\), is the sum of these coefficients. An MPE algorithm that does not perform this sum cannot be correct.\footnote{This MPE algorithm was used on selective SPNs (equivalent to deterministic and decomposable ACs) in~\cite{Peharz16}. It was also adapted to algebraic model counting (AMC) in~\cite{KimmigJAL16} and to Sum-Product Functions (SPFs) in~\cite{FriesenD16}. Determinism was not required in \cite{KimmigJAL16}. This is sound since AMC problems correspond to Boolean circuits where the weight of an instantiation is a product of literal weights, and is independent of how many times the instantiation appears as a subcircuit.}

Before we further discuss the impact of relaxing determinism, we point to a new class of arithmetic circuits, the Probabilistic Sentential Decision Diagram (PSDD)~\cite{KisaKR14}, which imposes \emph{stronger} versions of decomposability and determinism.  This enables the multiplication of two ACs in polytime, which is otherwise hard under the standard versions of these properties \cite{ShenCD16}.

\section{The Impact of Relaxing Determinism} \label{sec:relax}

We now consider two new implications of relaxing determinism, one positive and one negative. We also address an apparent paradox: How could a representation (decomposable and smooth ACs) allow the computation of marginals easily (a PP-complete problem), yet not allow the computation of MPE easily (an NP-complete problem)? Recall that the complexity class NP is included in the class PP.

The positive implication is that relaxing determinism can lead to exponentially smaller arithmetic circuits.
\begin{theorem}[Separation]\label{theo:separation}
There is a family of factors \(f_n(\X_n)\) where 
(1) there exists a decomposable and smooth arithmetic circuit \(\ac_n(\X_n)\) that computes the marginals of \(f_n\), with a size polynomial in \(n\);
(2) every deterministic, decomposable and smooth circuit that computes the marginals of factor \(f_n\) must have a size exponential in \(n\).
\end{theorem}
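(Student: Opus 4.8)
The plan is to prove the two clauses independently; they interact only through the choice of the witness family. For clause~(1) I would exhibit an explicit polynomial-size circuit and verify decomposability and smoothness by inspection; by Theorem~\ref{theo:polynomial} a decomposable and smooth circuit that computes the factor $f_n$ automatically computes its marginals, so nothing further is required there. The substance is clause~(2): an unconditional size lower bound against \emph{deterministic} (decomposable, smooth) circuits. For this I would use the ``collection of $\x$-subcircuits'' viewpoint of Section~\ref{sec:decomposability} together with Lemma~\ref{lem:determinism}.

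A natural witness is a mixture of two fully factorized distributions: over $n$ binary variables $\X_n$, set $f_n(\x)=\prod_{i=1}^n\alpha_i(x_i)+\prod_{i=1}^n\beta_i(x_i)$ with positive parameters chosen ``generically'', concretely so that the numbers $\prod_{i\in S}\rho_i$, where $\rho_i=\tfrac{\alpha_i(1)\beta_i(0)}{\alpha_i(0)\beta_i(1)}$, are pairwise distinct over all $S\subseteq\{1,\dots,n\}$ (e.g.\ take the $\rho_i$ multiplicatively independent); note $f_n(\x)>0$ for every $\x$. For clause~(1), $f_n$ is computed by $\big[\prod_i(\alpha_i(0)\lambda_{\bar x_i}+\alpha_i(1)\lambda_{x_i})\big]+\big[\prod_i(\beta_i(0)\lambda_{\bar x_i}+\beta_i(1)\lambda_{x_i})\big]$, which is decomposable (variables are disjoint under each $*$), smooth, and of size $O(n)$; equivalently it is obtained by summing out the root of a two-state naive Bayes network, which also explains why it is not deterministic (the top $+$ has two non-zero inputs under every $\x$).

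For clause~(2), take any deterministic, decomposable, smooth AC of size $s$ computing $f_n$; by Lemma~\ref{lem:inst} it is a collection of $\x$-subcircuits, and by Lemma~\ref{lem:determinism} each $\x$ has a \emph{unique} such subcircuit, with coefficient $f_n(\x)$. Group the (all) instantiations by the topmost $*$-node $v$ of their unique subcircuit; smoothness forces $\vars(v)=\X_n$, so $v$ induces a bipartition $(\U_v,\V_v)$ of $\X_n$, and applying Lemma~\ref{lem:induction} and Lemma~\ref{lem:determinism} to the two sub-ACs rooted at $v$'s children, the coefficient factors as $g_{L,v}(\x\!\restriction\!\U_v)\,g_{R,v}(\x\!\restriction\!\V_v)$. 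Writing $\mathcal R_v$ for the set of instantiations whose subcircuit has top $*$-node $v$, we get $f_n(\x)=\sum_v\mathbb 1[\x\in\mathcal R_v]\,g_{L,v}(\x\!\restriction\!\U_v)\,g_{R,v}(\x\!\restriction\!\V_v)$ with at most $s$ summands, and the $\mathcal R_v$ \emph{partition} the set of all instantiations (determinism). On the $2\times2$ minor spanned by any two instantiations of $\U_v$ and any two of $\V_v$, a short computation shows the determinant of $f_n$ factors into two terms of the form $\big(\prod_{A}\rho_i-\prod_{B}\rho_i\big)$ with $A,B$ disjoint and not both empty, hence is nonzero by the generic choice; so $\mathcal R_v$, viewed as a bipartite relation on the instantiations of $\U_v$ versus those of $\V_v$, contains no $2\times2$ all-present sub-rectangle, i.e.\ is $C_4$-free. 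Combining the Kővári--Sós--Turán bound on $|\mathcal R_v|$ (which is $O(2^{3n/4})$ when $(\U_v,\V_v)$ is balanced) with $\sum_v|\mathcal R_v|=2^n$ then forces $s=2^{\Omega(n)}$.

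The main obstacle is exactly this last counting step, because the circuit may choose \emph{unbalanced} splits $(\U_v,\V_v)$: a single $C_4$-free relation on the instantiations of one variable versus the other $n-1$ can be as large as $\approx 2^{n-1}$, so one cannot conclude from one level. The fix is the standard one for DNNF-style lower bounds: carry out the counting recursively down the circuit, or equivalently observe that once the small side of each split is committed the circuit is really reading variables one group at a time and must maintain the pair of partial products $\big(\prod_{i:x_i=1}\alpha_i,\ \prod_{i:x_i=1}\beta_i\big)$, which takes $2^{\Omega(n)}$ distinct values across any balanced frontier; the balanced-split estimate above is then the special case where such a frontier appears as a single node. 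Making this robust to arbitrary variable orderings and splittings, while keeping the reduction from the circuit to a ``disjoint union of decomposable rectangles'' clean --- this is where Lemma~\ref{lem:determinism} does the real work --- is the crux; the remainder is bookkeeping, including the reduction of ``computes the marginals'' to ``computes the factor and is decomposable and smooth'' via Theorem~\ref{theo:polynomial}.
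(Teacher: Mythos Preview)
Your route is entirely different from the paper's, and the gap you flag in your last paragraph is real and is not closed by what you wrote. The paper does not argue combinatorially at all: it takes as witness the factor \(f_n\) obtained from a polynomial-size DNNF \(g_n\) for the Sauerhoff function \(S_n\) of Bova, Capelli, Mengel and Slivovsky by replacing \(\land,\lor\) by \(*,+\), literals by indicators, and then smoothing; this already gives clause~(1). For clause~(2), any deterministic, decomposable, smooth \(\ac'_n\) computing \(f_n\) is translated back---indicators to literals, \(*,+\) to \(\land,\lor\), positive parameters to true, zero parameters to false---yielding a d-DNNF for \(S_n\) of the same size, which must be exponential by the cited separation. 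The entire proof is two syntactic translations plus a citation; no rectangle, rank, or K\H{o}v\'ari--S\'os--Tur\'an bound enters. Had your argument gone through, it would buy a self-contained proof with an explicit and very simple witness (a two-component mixture), which is attractive; but as written it does not.

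In your argument, everything up to and including the \(K_{2,2}\)-freeness of each \(\mathcal R_v\) is correct, but \(s=2^{\Omega(n)}\) does not follow. A single topmost \(*\)-node with \(|\U_v|=1\) can have \(|\mathcal R_v|=2^{n-1}\) while being \(K_{2,2}\)-free, so summing your per-node bounds gives nothing. Neither proposed fix is a proof. ``Recurse down the circuit'' fails because the sub-ACs below \(v\) compute \(g_{L,v}\) and \(g_{R,v}\), which are whatever the circuit happens to put there and are \emph{not} mixtures of two products, so there is no inductive hypothesis to apply to them. The ``maintain the pair of partial products'' heuristic is a read-once/branching-program argument that does not survive the DAG sharing permitted in an AC, where the same node can be reached along many root-to-node paths carrying different partial products. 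What would actually be needed is a balanced-rectangle extraction lemma of the kind that underlies DNNF/d-DNNF lower bounds: from a decomposable circuit of size \(s\) one extracts at most \(s\) pieces, each a product across a partition whose two sides both have \(\Theta(n)\) variables, and determinism upgrades the cover to a partition; only then does your \(2\times 2\)-minor computation bite. You neither state nor prove such a lemma, and it is exactly the content the paper's approach outsources to the Bova et al.\ result.
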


\begin{proof}
\cite{BovaCMS16} identifies a family of Boolean functions (the Sauerhoff functions) \(S_n\) that have decomposable NNFs (DNNFs) with sizes polynomial in \(n,\) but where their deterministic DNNFs (d-DNNFs) must have sizes exponential in \(n\).  Previously known separations were conditional on the polynomial hierarchy not collapsing \cite{darwicheJAIR02}, but \cite{BovaCMS16} does not make such an assumption (and neither do we).

Let \(g_n\) denote a polysize DNNF for function \(S_n\) and let \(\ac_n\) denote the polysize decomposable and smooth arithmetic circuit obtained by: replacing the inputs of \(g_n\) with the corresponding indicator variables, replacing conjunctions and disjunctions by products and sums, respectively, then smoothing if necessary.  The resulting arithmetic circuit \(\ac_n\) has a positive value (possibly \(> 1\)) on input \(\x\) iff the original function \(S_n\) evaluates to true.  We now show that if \(f_n\) is the factor computed by arithmetic circuit \(\ac_n\), then any deterministic, decomposable and smooth AC that computes \(f_n\) must have an exponential size.

Let \(\ac^\prime_n\) be such a circuit.  Consider the d-DNNF \(g^\prime_n\) obtained by: replacing the indicator variables with the corresponding literals of variables \(\X\), replacing products and sums with conjunctions and disjunctions, respectively, and by replacing positive parameters with true and zero parameters with false.  Note that \(g^\prime_n(\x)\) is true iff \(\ac^\prime_n(\x) > 0\), i.e., a complete subcircuit for \(g^\prime_n\) evaluates to true iff the corresponding subcircuit for \(\ac^\prime_n\) has a positive coefficient.  Hence, if \(\ac^\prime_n\) had a sub-exponential size, then function \(S_n\) would have a sub-exponentially sized d-DNNF, which we know does not exist \cite{BovaCMS16}.
\end{proof}

We now get to a newly identified, negative implication of relaxing determinism. It pertains to compiling ACs from probabilistic models and requires the following notion.

\begin{definition}
A set of parameters \(\Theta\) is \de{complete} for factor \(f(\X)\) iff for every instantiation \(\x\), the parameter \(f(\x)\) can be expressed as a product of parameters in \(\Theta\). 
\end{definition}
The parameters of a Bayesian network are complete for its distribution; those of a Markov network are complete for its partition function; and the parameters \(\Theta = \{0,1\}\) are complete for {\em Boolean factors}: \(f(\X)\) with \(f(\x) \in \{0,1\}\). 

We will write \(\ac(\X,\Theta)\) to denote an arithmetic circuit whose parameters are in \(\Theta\). The following theorem states a key property which is lost when relaxing determinism.
\begin{theorem}[Completeness]\label{theo:determinism}
Consider factor \(f(\X)\) and complete parameters \(\Theta\). There must exist an arithmetic circuit \(\ac(\X,\Theta)\) that computes the factor marginals and is deterministic, decomposable and smooth.
\end{theorem}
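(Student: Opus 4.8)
The plan is to exhibit such a circuit explicitly, starting from the network polynomial of \(f\) and then repairing its coefficients so they lie in \(\Theta\). Recall from Definition~\ref{def:poly} that the polynomial of \(f(\X)\) is the two-level sum-of-products \(P=\sum_\x f(\x)\prod_{x\sim\x}\lambda_x\), which I would realize as a single \(+\)-node whose inputs are \(*\)-nodes, one per instantiation \(\x\), each multiplying a parameter leaf \(f(\x)\) by the indicator leaves \(\{\lambda_x : x\sim\x\}\). First I would check that \(P\) already meets every structural requirement of the theorem except that its parameters need not lie in \(\Theta\). It contains an indicator for every variable, and its only \(+\)-node \(n\) (the root) has \(\vars(n)=\X=\vars(c)\) for each child \(c\), since every term mentions every variable; hence \(P\) is smooth. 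Under any \(*\)-node the children are indicator leaves attached to distinct variables together with one parameter leaf carrying no variable, so \(P\) is decomposable. It plainly computes \(f\) — evaluating at a complete instantiation \(\x_0\) kills every term but the one for \(\x_0\), which contributes \(f(\x_0)\) — so by Theorem~\ref{theo:polynomial} it computes the factor marginals (as already remarked after Definition~\ref{def:poly}). Determinism holds for the same reason: under any complete \(\x_0\) the root \(+\)-node has at most one non-zero input, namely the term for \(\x_0\).

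Next I would use completeness of \(\Theta\) to fix the coefficients. For each instantiation \(\x\), write \(f(\x)=\theta^{\x}_1\cdots\theta^{\x}_{k_\x}\) with every \(\theta^{\x}_i\in\Theta\) (possible by the definition of a complete parameter set), and replace the single parameter leaf \(f(\x)\) in the \(*\)-node for \(\x\) by the parameter leaves \(\theta^{\x}_1,\dots,\theta^{\x}_{k_\x}\) (if \(f(\x)\) is representable only as the empty product, the \(*\)-node simply keeps its indicator leaves). Call the result \(\ac(\X,\Theta)\). Since a \(*\)-node computes a product, the value of each term, and hence of the whole circuit under every input, is unchanged, so \(\ac(\X,\Theta)\) still computes the factor marginals. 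The new leaves are parameters, so \(\vars\) of every node is unchanged; thus decomposability and smoothness are preserved. And since we introduced no new \(+\)-node and changed no term's value, \(\ac(\X,\Theta)\) is still deterministic. All of its parameters now lie in \(\Theta\), as required.

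The one point that needs care is the treatment of zero entries and degenerate factors. If \(f(\x)=0\) for some \(\x\), then, since a product of non-negative reals vanishes only when one of its factors is zero, completeness forces \(0\in\Theta\), so the coefficient \(0\) is available and the term for that \(\x\) contributes nothing, exactly as it does in \(P\). If \(f\) is identically zero (or \(\X=\emptyset\)) one instead uses a single term \(0\cdot\prod_{X\in\X}\lambda_{x_X}\) for arbitrary fixed values \(x_X\), which is trivially deterministic, decomposable, smooth, and computes the (all-zero) marginals. I expect this bookkeeping, rather than any deep idea, to be the only subtle step; the rest is routine verification against the definitions of decomposability, smoothness, and determinism. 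Note finally that the circuit produced is in general of exponential size, but the theorem asserts only existence and not succinctness, so this is not an obstacle.
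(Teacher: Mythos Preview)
Your proposal is correct and takes essentially the same approach as the paper: start from the factor polynomial of Definition~\ref{def:poly}, replace each coefficient \(f(\x)\) by a product of parameters from \(\Theta\), and observe that the resulting two-level circuit is deterministic, decomposable and smooth. The paper's proof is a terse two sentences; you have simply spelled out the structural verifications and edge cases in detail, which is fine (though the separate handling of the identically-zero and \(\X=\emptyset\) cases is not really needed, since the main construction already covers them once \(0\in\Theta\) is available).
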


\begin{proof}
Consider the factor polynomial \(\sum_\x f(\x) \prod_{x \sim \x} \lambda_x\) in Definition~\ref{def:poly} and replace each \(f(\x)\) by a product of parameters from \(\Theta\). The result can be represented by an AC that is deterministic, decomposable and smooth.
\end{proof}

The standard methods for compiling Bayesian networks, and graphical models more generally, into arithmetic circuits do indeed limit the circuit parameters to those appearing in the model factors. Hence, the compilation process amounts only to finding a (small) circuit structure since the circuit parameters are already predetermined. As mentioned earlier, these methods can yield relatively small circuits for some graphical models with very high treewidth \cite{Chavira.Darwiche.Aij.2008,UAIEvaluation08}.

The above property is lost if one insists on constructing arithmetic circuits that are decomposable and smooth, but not deterministic. This is shown in the following theorem, which refers to \emph{dead} circuit nodes: ones that appear only in complete subcircuits that have zero coefficients.\footnote{Dead nodes can be replaced by the constant zero without changing the factor computed by the circuit. One can relax determinism trivially by adding dead nodes, but that does not help as far as obtaining smaller circuits.}

\begin{theorem}[Parametric Incompleteness]\label{theo:incompleteness-bool}
Let \(f(\X)\) be a Boolean factor and \(\Theta = \{0,1\}\) (\(\Theta\) is complete for \(f\)).  A circuit \(\ac(\X,\Theta)\) cannot compute \(f(\X)\) if it is decomposable, smooth and free of dead nodes, but not deterministic.
\end{theorem}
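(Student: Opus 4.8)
I would argue by contradiction: assume $\ac(\X,\{0,1\})$ is decomposable, smooth, free of dead nodes, computes the Boolean factor $f(\X)$, yet is \emph{not} deterministic, and derive a violation of ``computes $f$''. The first observation is that, since all parameters lie in $\{0,1\}$, the coefficient of every complete subcircuit (a product of parameters) is itself $0$ or $1$; hence, by Corollary~\ref{coro:sum}, for every instantiation $\x$ the value $\ac(\x)$ equals the \emph{number} of $\x$-subcircuits with coefficient $1$. Because $\ac$ computes $f$ and $f$ is Boolean, at most one $\x$-subcircuit can have coefficient $1$, for each $\x$. So it suffices to exhibit a single instantiation $\x'$ of $\X$ that admits two distinct complete subcircuits of coefficient $1$: then $\ac(\x')\ge 2$, contradicting $\ac(\x')=f(\x')\in\{0,1\}$.

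Next I would unpack non-determinism. By definition there is an instantiation $\x$ and a $+$-node $n$ having two distinct children $c_1\neq c_2$ both evaluating to a non-zero value under $\x$. Write $\X_n=\vars(n)$ and let $\x_n$ be the restriction of $\x$ to $\X_n$; by smoothness $\vars(c_1)=\vars(c_2)=\X_n$, and by Lemma~\ref{lem:induction} the subcircuits rooted at $c_1$ and $c_2$ are decomposable and smooth. Evaluating $\ac$ under $\x$ gives the $\X_n$-indicators exactly the values they would get when evaluating the subcircuit at $c_i$ under $\x_n$, so each $c_i(\x_n)>0$; by Corollary~\ref{coro:sum} applied to these subcircuits, $c_1$ has an $\x_n$-subcircuit $\alpha_1$ with a non-zero---hence equal to $1$---coefficient, and similarly $c_2$ has such an $\alpha_2$. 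Prepending the edge from $n$, this yields two \emph{distinct} $\x_n$-subcircuits of the subcircuit rooted at $n$, both with coefficient $1$ (distinct because they pass through different children of $n$).

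I would then lift this to the whole circuit using freedom from dead nodes: since $\ac$ has no dead nodes, $n$ occurs in some complete subcircuit $\beta$ of $\ac$ with non-zero---again equal to $1$---coefficient, so every parameter occurring in $\beta$ is $1$. Performing ``surgery'' on $\beta$---replacing the portion at and below a fixed occurrence of $n$ by the choice $n\to c_1$ followed by $\alpha_1$---produces a complete subcircuit $\beta_1$ of $\ac$; replacing instead by $n\to c_2$ followed by $\alpha_2$ produces $\beta_2$. Both still contain only parameters equal to $1$, hence have coefficient $1$; they are distinct, since they take different children at $n$; and, by decomposability, the indicators of $\beta$ lying outside the subcircuit rooted at $n$ involve only variables disjoint from $\X_n$ and are untouched by the surgery, so $\beta_1$ and $\beta_2$ have a common term $\x'$---that outside instantiation of $\X\setminus\X_n$ together with $\x_n$ (a full instantiation of $\X$ by Lemma~\ref{lem:inst}). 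This gives the desired $\x'$ with two coefficient-$1$ subcircuits, and the contradiction follows.

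The hard part will be making the surgery step precise over a DAG rather than a tree: a node can have several occurrences in the ``unrolled'' circuit, so one must (i) fix a single occurrence of $n$ inside $\beta$, (ii) verify the replacement is a syntactically legitimate complete subcircuit in the sense of Definition~\ref{def:subcircuit}, and (iii) check---by following the path from the root to that occurrence of $n$ and invoking decomposability at each $*$-ancestor---that the untouched indicators form an instantiation of $\X\setminus\X_n$ compatible with $\x_n$, so that $\beta_1$ and $\beta_2$ are genuinely $\x'$-subcircuits for one and the same $\x'$. Everything else is a direct application of Corollary~\ref{coro:sum}, Lemma~\ref{lem:inst}, and Lemma~\ref{lem:induction}, plus the elementary fact that a product of $0$/$1$ parameters is $1$ only when all of them are.
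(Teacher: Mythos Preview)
Your proposal is correct and follows essentially the same line as the paper's proof: locate a \(+\)-node \(n\) witnessing non-determinism, use ``no dead nodes'' to route a non-zero complete subcircuit through \(n\), and splice in the two non-zero branches below \(n\) to obtain two distinct complete subcircuits with the same term and non-zero coefficients. The only cosmetic difference is the direction of the final contradiction---the paper argues that two non-zero coefficients summing (with others) to \(f(\x')\le 1\) forces a coefficient in \((0,1)\) and hence a parameter outside \(\{0,1\}\), whereas you use \(\Theta=\{0,1\}\) up front to get integer coefficients and conclude \(\ac(\x')\ge 2\); your version also spells out the DAG ``surgery'' step that the paper leaves implicit.
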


\begin{proof}
If the AC has no \(+\)-node, then it is vacuously deterministic. Otherwise, it has a \(+\)-node.  Since the circuit is not deterministic, there is a \(+\)-node \(n\) that violates determinism.  This node is included in some complete subcircuit with a non-zero coefficient (otherwise, the node \(n\) is dead).  Since node \(n\) violates determinism, we can find two distinct \(\x\)-subcircuits, with non-zero coefficients, that differ by the branch selected at node \(n\).  Since the circuit computes factor \(f(\X)\), Lemma~\ref{lem:sum} implies that the coefficients of these \(\x\)-subcircuits must add up to \(f(\x) = 1\).  There must then exist an \(\x\)-subcircuit whose coefficient is in \((0,1)\), exclusive, i.e., the circuit has a parameter not in \(\{0,1\}\).
\end{proof}

In other words, if a decomposable and smooth circuit \(\ac(\X,\{0,1\})\) computes the marginals of a Boolean factor, it must also be non-trivially deterministic. This result has a major implication on compiling probabilistic graphical models into ACs that are not deterministic. That is, one cannot generally restrict the circuit parameters to those appearing in the model, otherwise a circuit may not exist.

Therefore, while relaxing determinism can lead to exponentially smaller circuits, finding these circuits is now more involved as it may require searching for parameters. This demands new techniques as all techniques we are aware of for compiling models into deterministic circuits assume that the circuit parameters come from model parameters. 

Our last contribution relates to the following apparent paradox. Suppose we have a set of factors \(f_1(\X_1),\ldots,f_n(\X_n)\), representing a probabilistic graphical model that has a corresponding joint factor \(f = f_1 \cdots f_n\). Consider now the following decision problems, over such probabilistic graphical models, which correspond to computing the MPE and marginals:
\begin{itemize}
\item \textbf{D-MPE}: Is there an instantiation \(\x\) where \(f(\x) > k\)?
\item \textbf{D-PR}: Is \(\sum_{\x} f(\x) > k\)?
\end{itemize}
D-MPE is NP-complete, whereas D-PR is PP-complete. Moreover, the complexity class PP includes NP. Yet, decomposable and smooth ACs allow one to compute marginals in linear time, while computing MPE, which is no harder, is hard on these circuits!

To resolve this apparent paradox, one must observe the sometimes subtle distinction between a {\em representation} and the {\em computation} needed to produce that representation. The representation here is decomposable and smooth ACs, and the computation is the algorithm used to compile a graphical model into this representation. While the representation itself does not facilitate the computation of MPE, the compilation algorithm must be sufficient to compute the MPE query without additional complexity (beyond polynomial). To formalize this, we need the following lemma.

\begin{lemma}\label{lemma:mpe}
D-MPE can be reduced to D-PR.
\end{lemma}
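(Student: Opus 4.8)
The plan is to prove this via the complexity classification the paper has already set up. I would observe that \textbf{D-MPE} is in $\mathrm{NP}$ (a witness is an instantiation $\x$, and checking $f(\x)>k$ from the factors $f_1,\ldots,f_n$ takes polynomial time), that \textbf{D-PR} is $\mathrm{PP}$-complete and in particular $\mathrm{PP}$-hard under polynomial-time many-one reductions, and that $\mathrm{NP}\subseteq\mathrm{PP}$. Composing the membership \textbf{D-MPE} $\in\mathrm{NP}\subseteq\mathrm{PP}$ with the $\mathrm{PP}$-hardness of \textbf{D-PR} yields the desired reduction. All three ingredients are already invoked in the excerpt (the $\mathrm{NP}$/$\mathrm{PP}$ classifications of MPE and PR, and the remark that $\mathrm{PP}$ includes $\mathrm{NP}$), so this is essentially a one-line argument, and I expect the paper's proof to be exactly this.

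If instead one wants an explicit construction (which is more illuminating for the surrounding discussion about compilation), the natural approach is the standard ``power'' trick. Given model factors $f_1(\X_1),\ldots,f_n(\X_n)$ with joint $f=f_1\cdots f_n$ and threshold $k$, first rescale all tables and $k$ to non-negative integers by clearing denominators (polynomial, since everything is written in binary). Then for an integer $t\ge 1$ form the model consisting of $t$ copies of each $f_i$ — equivalently, the tables $f_i^t$ obtained by raising entries to the power $t$ — whose joint factor is $f^t$, so that $M^t \le \sum_\x f(\x)^t \le N\,M^t$ where $M=\max_\x f(\x)$ and $N$ is the number of instantiations of $\X$. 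Choosing $t$ so that $(1+1/k)^t > N$ (hence $t=O(k\log N)$ suffices), the \textbf{D-PR} query ``is $\sum_\x f(\x)^t > N\,k^t$?'' holds iff $M\ge k+1$, i.e.\ iff some $\x$ has $f(\x)>k$; this is a \textbf{D-PR} instance over a model of the required shape.

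The main obstacle for this explicit version is the size of the exponent: $t\approx k\log N$ is polynomial in the input only when $k$ and the parameter bit-lengths are polynomially bounded, so the power trick is not by itself a general polynomial-time many-one reduction, and for the statement as phrased one falls back on the $\mathrm{NP}\subseteq\mathrm{PP}$ argument above. Either way the payoff for the surrounding text is the same: \textbf{D-MPE} reduces to \textbf{D-PR}, hence any procedure that answers \textbf{D-PR} — in particular, compiling a model into a decomposable and smooth AC and evaluating it — suffices to answer \textbf{D-MPE} with only polynomial overhead, which resolves the stated paradox.
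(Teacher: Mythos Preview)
Your complexity-class argument is correct: $\textbf{D-MPE}\in\mathrm{NP}\subseteq\mathrm{PP}$, and the $\mathrm{PP}$-hardness of \textbf{D-PR} then yields the reduction. However, the paper does \emph{not} take this route, contrary to your expectation. Instead it gives an explicit Cook-style construction: it builds a polynomial-size Boolean circuit that, on input $\x$, outputs true iff $f(\x)>k$ (multiplexers select each $f_i(\x_i)$ as a bitstring, a multiplier circuit forms the product, a comparator checks against $k$), then Tseitin-transforms this circuit into a CNF over $\X$ and auxiliary variables $\Y$, yielding Boolean factors $g_1,\ldots,g_m$ with $\sum_{\x,\y} g(\x,\y)>0$ iff some $\x$ has $f(\x)>k$. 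Your argument is shorter and reuses the cited classifications as black boxes; the paper's construction is more self-contained and, crucially, hands you concrete factors $g_1,\ldots,g_m$ of size $\mathrm{poly}(s)$, which is exactly what the proof of Theorem~\ref{theo:mpe-2} invokes in its first line. Your second ``power trick'' idea is, as you already note, not a polynomial reduction in general, and it is not what the paper does either.
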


We now have the following result, which implies that a polytime compilation algorithm for decomposable and smooth ACs can be used as a sub-routine for a polytime algorithm for computing MPEs (which we do not expect to exist, under typical complexity theoretic assumptions).
\begin{theorem}\label{theo:mpe-2}
Consider an algorithm \(\Xi\) that takes a set of factors \(f_1(\X_1), \ldots, f_n(\X_n)\) as input and returns a decomposable and smooth arithmetic circuit that computes the marginals of factor \(f = f_1 \cdots f_n\). Let \(s\) be the size of input factors and let \(O(t(s))\) be the time complexity of algorithm \(\Xi\). One can compute the MPE of factor \(f\) in time \(O(t(\mathrm{poly}(s)))\).
\end{theorem}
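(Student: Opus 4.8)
The plan is to use the compilation algorithm $\Xi$, via Lemma~\ref{lemma:mpe}, as an oracle for the decision problem \textbf{D-MPE}, and then bootstrap that oracle into an algorithm that actually outputs an MPE instantiation of $f$. The first ingredient is that $\Xi$ yields a routine for \textbf{D-PR}: given any set of factors $g_1,\ldots,g_l$ and a threshold $v'$, run $\Xi$ on $g_1,\ldots,g_l$ to obtain a decomposable and smooth circuit $\ac$ that computes the marginals of $g = g_1\cdots g_l$, then evaluate $\ac$ at the empty instantiation (all indicators set to $1$) as in Definition~\ref{def:ac}, obtaining $\ac() = \sum_\x g(\x)$ in time linear in the circuit size, and compare this to $v'$. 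Since the circuit size is at most the running time of $\Xi$, this routine runs in time $O(t(s'))$ with $s'$ the size of $g_1,\ldots,g_l$. Composing it with the polynomial-time many-one reduction of Lemma~\ref{lemma:mpe} --- which maps a \textbf{D-MPE} instance of size $s$ to a \textbf{D-PR} instance of size $\mathrm{poly}(s)$ with the same answer --- gives a \textbf{D-MPE} oracle running in time $O(t(\mathrm{poly}(s)))$.

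The second step turns this oracle into an MPE solver through two rounds of search. Clearing denominators, we may assume without loss of generality that every $f_i$ has non-negative integer entries: multiplying $f_i$ by the product of its denominators scales the joint $f$ by a fixed positive constant, leaving $\argmax_\x f(\x)$ unchanged while increasing bit-lengths only polynomially. Then the MPE value $v^\star = \max_\x f(\x)$ is a non-negative integer of polynomial bit-length (at most $\prod_i \max_{\x_i} f_i(\x_i)$), so a binary search that asks the oracle ``is there an $\x$ with $f(\x) > k$?'' for $O(\mathrm{poly}(s))$ successive thresholds $k$ pins down $v^\star$ exactly. To recover a maximizer, fix the variables of $\X$ one at a time: having committed to a partial instantiation $\mathbf{e}$, append to $f_1,\ldots,f_n$ the constant-size $0/1$ evidence factors enforcing $\mathbf{e}$ together with a tentative value $X = x$, and ask the oracle whether the resulting product has an instantiation of value $> v^\star - 1$ (equivalently, $= v^\star$, by integrality and maximality of $v^\star$). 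At least one value $x$ must succeed, so committing to it preserves the invariant ``some completion of $\mathbf{e}$ attains $v^\star$''; after processing all variables, $\mathbf{e}$ is a full instantiation with $f(\mathbf{e}) = v^\star$, i.e., an MPE.

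For the running time, the construction makes $\mathrm{poly}(s)$ calls to the \textbf{D-MPE} oracle --- $O(\mathrm{poly}(s))$ for the binary search and $O(\mathrm{poly}(s))$ for the decoding (one per (variable, value) pair) --- each on an instance of size $O(s)$ (the original input plus a linear number of evidence factors), so each call costs $O(t(\mathrm{poly}(s)))$; the polynomially many calls and the polynomial-time reductions are absorbed into the inner $\mathrm{poly}$, which is legitimate for any non-decreasing, at-least-linear $t$. I expect the main obstacle to be precisely this bookkeeping: one must verify that $\Xi$ is only ever invoked on inputs polynomially larger than the original, that only polynomially many invocations occur, and that $v^\star$ has polynomial bit-length so the search terminates --- and, on the side, that the reduction of Lemma~\ref{lemma:mpe} indeed produces an instance of the restricted ``product of factors'' form that $\Xi$ expects. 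The remaining pieces (the self-reduction from deciding \textbf{D-MPE} to finding an MPE, and the correctness of each binary-search and decoding step) are routine.
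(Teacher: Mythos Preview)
Your proposal is correct and rests on the same core idea as the paper: use Lemma~\ref{lemma:mpe} to reduce a \textbf{D-MPE} query to a \textbf{D-PR} query on a product of factors of size \(\mathrm{poly}(s)\), then answer \textbf{D-PR} by calling \(\Xi\) once and evaluating the resulting circuit at the empty instantiation. The paper's own proof stops there --- a single invocation of \(\Xi\) answering a single decision query --- and leaves the passage from the decision problem to the actual \(\argmax_\x f(\x)\) implicit. Your write-up is more complete: you spell out the standard search-to-decision self-reduction (binary search on the threshold to pin down \(v^\star\), then fix variables one by one using evidence factors), and you verify that the intermediate instances fed to \(\Xi\) stay polynomial in \(s\) and that \(v^\star\) has polynomial bit-length after clearing denominators.

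One small bookkeeping caveat: your procedure makes \(\mathrm{poly}(s)\) oracle calls, each costing \(O(t(\mathrm{poly}(s)))\), for a total of \(\mathrm{poly}(s)\cdot t(\mathrm{poly}(s))\). Folding this back into \(O(t(\mathrm{poly}(s)))\) is fine for any ``reasonable'' complexity bound (polynomial, exponential, time-constructible, etc.), but ``non-decreasing and at-least-linear'' alone does not literally guarantee \(\mathrm{poly}(s)\cdot t(p(s)) = O(t(q(s)))\) for some polynomial \(q\) without a mild smoothness assumption on \(t\). The paper sidesteps this by making only one call to \(\Xi\); if you want the tighter statement with the search step included, it is cleanest to add that standard regularity hypothesis on \(t\) explicitly.
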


These findings highlight an interesting property of decomposable and smooth ACs. They ``store'' the results to an exponential number of marginal queries, where each result can be retrieved by a simple traversal of the circuit. Yet, they do not ``store'' the answers to MPE queries, even though these queries are easier. The implication of this can be seen from two angles, depending on whether these circuits are compiled from models or learned from data. In the former case, the compilation algorithm is readily available to answer MPE queries, but at the cost of invoking this algorithm for each query. In the latter case, however, answering MPE queries remains a challenge. Hence, learning circuits that are not deterministic needs to yield an additional benefit that compensates for this loss in tractability. This could be a simpler learning algorithm; a smaller learned circuit; or a learned circuit whose factor is superior from a statistical learning viewpoint.

\section*{Acknowledgements} 

This work has been partially supported by NSF grant \#IIS-1514253, ONR grant \#N00014-15-1-2339 and DARPA XAI grant \#N66001-17-2-4032. We thank YooJung Choi, Umut Oztok, Yujia Shen, and Guy Van den Broeck for comments and discussions on this paper.

\appendix

\section{Proofs}

\begin{proof}[Proof of Corollary~\ref{cor:map}]
Given an instantiation \(\y\) and value \(k\), we can test whether \(\sum_\z \ac(\y,\z) > k,\) by evaluating the circuit \(\ac\) in time linear in the size of the circuit.  Hence, the problem is in NP.  To show the problem is NP-hard, we reduce the (decision) problem of computing (partial) MAP in a naive Bayes network, as in the proof of Theorem~\ref{theo:mpe-1}.
\end{proof}

\begin{proof}[Proof of Lemma~\ref{lemma:mpe}.]
We first reduce D-MPE to satisfiability on a CNF (this is essentially an instance of Cook's theorem).  Satisfying assignments of the CNF correspond to solutions of the D-MPE query, which we can count using a D-PR query.

First, we construct a Boolean circuit that takes inputs \(\x\), and outputs true if \(f(\x) > k\) and false otherwise.  We construct a (multiplexer) circuit for each factor \(f_i(\X_i)\), which has inputs \(\x_i\) and outputs a bitstring \(\y_i\) representing a bit encoding of the value \(f_i(\x_i)\) (which we assume are rational values).  We then construct a circuit that represents a multiplier, which takes as input the bitstrings \(\y_i\) and outputs another bitstring \(\z\) representing the product of \(f_i(\x_i)\).  Finally, we have another circuit that takes the bitstring \(\z\) as input, and outputs true if this bitstring represents a value that is greater than \(k\), and false otherwise. Hence, the output of this circuit is true iff \(f(\x) > k\).  Each one of the constructed circuits has size polynomial in the size of the inputs, i.e., the aggregate size of the factors and the number of bits needed to represent their values.

We can reduce this circuit to a CNF by adding auxiliary variables \(\Y\), using one new variable for the output of each gate, i.e., we reduce circuit satisfiability to 3-SAT; see, e.g., \cite{KleinbergTardos}.  This results in a set of Boolean factors \(g_1,\ldots,g_m\).  If \(g = g_1 \cdots g_m\), then \(g(\x) = \sum_{\y} g(\x,\y) > 0\) iff \(f(\x) > k\), and \(\sum_{\x,\y} g(\x,\y) > 0\) iff there exists an input \(\x\) where \(f(\x) > k\).
\end{proof}

\begin{proof}[Proof of Theorem~\ref{theo:mpe-2}.]
Given factors \(f_1,\ldots,f_n\) of size \(s\), we first construct a set of factors \(g_1,\ldots,g_m\) of size \(\mathrm{poly}(s)\), as in Lemma~\ref{lemma:mpe}.  We invoke algorithm \(\Xi\) on factors \(g_1,\ldots,g_m\), obtaining a decomposable and smooth arithmetic circuit \(g\) representing \(g_1 \cdots g_m\) in time \(O(t(\mathrm{poly}(s)))\).  The size of \(g\) is also \(O(t(\mathrm{poly}(s)))\) (the size of the circuit cannot be larger than the time needed to construct it).  The same amount of time is required to evaluate the marginal of \(g\), hence the overall time to compute the MPE of factor \(f\) is \(O(t(\mathrm{poly}(s)))\).
\end{proof}

{\small

}

\end{document}